%% file: main.tex
\documentclass{article}
\usepackage{preprint,times}
\usepackage{amsmath,amssymb,amsfonts,amsthm}
\usepackage{bm}
\usepackage{booktabs}
\usepackage{array}
\usepackage{tabularx}
\usepackage{graphicx}
\usepackage{xcolor}
\usepackage{algorithm}
\usepackage{algpseudocode}
\usepackage{microtype}
\usepackage{float}
\usepackage{wrapfig}
\usepackage{capt-of}
\usepackage{needspace}
\usepackage[section]{placeins}
\usepackage{tikz}
\usetikzlibrary{arrows.meta,positioning,calc,fit,shapes.geometric}
\usepackage{url}
\usepackage{hyperref}

\newcommand{\E}{\mathbb{E}}
\newcommand{\Cov}{\operatorname{Cov}}

\newcommand{\tr}{\operatorname{tr}}
\newcommand{\vecop}{\operatorname{vec}}
\newcommand{\norm}[1]{\left\lVert #1\right\rVert}
\newcommand{\loss}{\mathcal{L}}

\newcolumntype{Y}{>{\raggedright\arraybackslash}X}
\newtheorem{proposition}{Proposition}

\usepackage{mathtools}
\newcommand{\R}{\mathbb R}
\DeclareMathOperator{\sign}{sign}
\DeclareMathOperator{\polar}{polar}

\providecommand{\doi}[1]{\href{https://doi.org/#1}{doi:\nolinkurl{#1}}}
\allowdisplaybreaks[1]

\AddToHook{env/thebibliography/begin}{\interlinepenalty=10000}

\hypersetup{hidelinks,
 pdftitle={How Inner Optimizers Shape Replica Scaling},
 pdfauthor={Eugene Belilovsky, Stefan Horoi, Edouard Oyallon, Benjamin Therien}}

\title{How Inner Optimizers Shape Replica Scaling}

\author{%
{\small Eugene Belilovsky$^{1,2}$\quad
Stefan Horoi$^{1,3}$\quad
Edouard Oyallon$^{4}$\quad
Benjamin Th\'erien$^{1,3}$}\\[3pt]
{\normalfont\small $^{1}$Mila\quad
$^{2}$Concordia University\quad
$^{3}$Universit\'e de Montr\'eal\quad
$^{4}$CNRS}
}

\begin{document}
\maketitle
{\renewcommand{\thefootnote}{}%

 \addtocounter{footnote}{-1}}

\vspace{-10pt}
\begin{abstract}
Training large language models (LLMs) across geographically distributed and poorly interconnected compute pools or data centers has become a topic of significant interest. This has led to increased study of local optimization methods such as DiLoco for LLMs, especially those that use local (or inner) adaptive or preconditioned optimizers, which are critical for high-performance LLM training. However, a central challenge remains: it is well known that increasing the number of model replicas (replica scaling) while fixing the total training budget leads to monotonically decreasing performance. Existing work has empirically demonstrated that the choice of inner optimizer (Muon v.s. AdamW) can significantly improve this trend, but offers only limited explanation of the performance difference. In this work, we consider a range of inner-optimizers and identify two important factors that impact replica scaling: (1) the covariance structure of the pseudo-gradients and (2) the impact of batch size on the pseudogradient's noise scale.
Based on these insights, we theoretically and empirically demonstrate that whitening-based pre-conditioning is a powerful inner-optimizer desideratum. Our empirical evaluation extends the results to reveal that whitening-based inner-optimizers achieve strong replica scaling (for $8-96$ replicas), with performance relative to DP improving as parameters and data are scaled.

\end{abstract}
\vspace{-6pt}

\section{Introduction}
\noindent\begin{minipage}[t]{.565\linewidth}
\vspace{0pt}
The computational scale of language-model training increasingly exceeds the
capacity of a single site. Training across geographically separated clusters
can pool otherwise fragmented resources, but the connectivity between these clusters is poor, leading to significant communication bottlenecks. Local optimization methods address this
constraint by allowing replicas to optimize independently for several steps
between parameter or pseudo-gradient averages
\citep{douillard2023diloco,douillard2025streaming,liu2024async} often drastically reducing the frequency of communication.

A central difficulty in practical local optimization is replica scaling: at a fixed token budget, increasing the number of replicas often degrades the final model performance
\citep{charles2025scaling} relative to its non-local data-parallel (DP) counterpart. This can limit the broad applicability of these methods. It has been shown that the replica scaling can improve with increasing model size \citep{charles2025scaling}. Recent empirical results show that modifying either the inner optimizer
\citep{therien2025muloco} or the outer aggregation rule
\citep{horoi2026isoloco} can further improve the scalability and efficiency of distributed training. However, the impact of inner optimizers on scaling is not well understood. More broadly, the primary sources of divergence from standard non-local training are not well understood.
\end{minipage}\hfill
\begin{minipage}[t]{.41\linewidth}
\vspace{0pt}
\centering
\includegraphics[width=\linewidth]{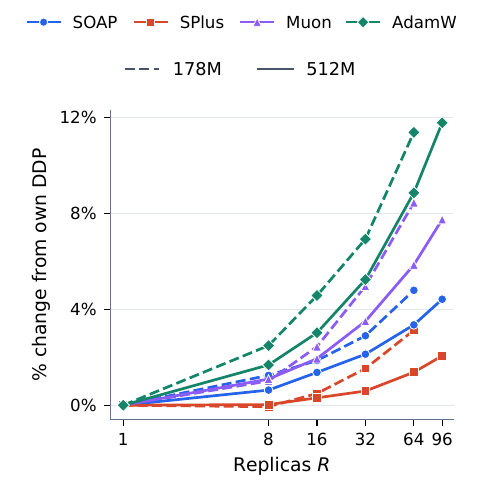}
\setlength{\abovecaptionskip}{1pt}
\vspace{-20pt}
\captionof{figure}{\small \textbf{Local optimization performance as replicas are scaled.} Each local optimizer is compared to its own DP baseline, to eliminate confounding replica scaling with absolute loss. Optimizers that whiten the pseudogradient covariance achieve the strongest worker scaling. SPlus is the best proxy for whitening. We note that SOAP's stronger absolute performance sometimes leads to the best results despite its poorer worker scaling. Notably, we also observe a strong scaling trend relative to DP for all inner-optimizers.}
\label{fig:scaling-178m-vs-512m}
\end{minipage}
\par\Needspace{5\baselineskip}

Classical analyses of local SGD and federated averaging study the discrepancy accumulated over multiple local steps, often with heterogeneous client data. The primary source of performance degradation is often attributed to client drift or other notions of divergence of model replicas along their trajectory \citep{karimireddy2020scaffold}. Much of this work and theory \citep{stich2019local,karimireddy2020scaffold,yu2019momentum} focuses on SGD and momentum SGD as the inner optimizer, a single local step from common parameters and optimizer states that commute with averaging: averaging the updates gives the same result as updating with the averaged gradient. Nonlinear adaptive inner optimizers need not satisfy this property and are much less well understood in the context of this literature. Notably, a discrepancy can arise on the first step, even with identical starting states and identically distributed data, because transforming each local gradient before averaging differs from transforming the averaged gradient. Understanding replica scaling therefore requires examining how the inner optimizer transforms gradient noise, in addition to how parameters drift over successive local steps.

In this work, we focus on a broad range of popular
\emph{adaptive} or \emph{preconditioner} based optimizers and identify two critical properties that can greatly impact scaling: (a) the covariance structure of the pseudo-gradient induced by the local optimization (b) how the optimizers deal with statistical estimation of second moments in the low batch size setting.

Our contributions can be summarized as follows: 
\begin{itemize}
    \item We demonstrate, both empirically and theoretically, the importance of the pseudogradient covariance structure induced by the choice of inner optimizer. Specifically, we link the choice of inner optimizer to the maximum discrepancy with respect to the DP update direction. 
    \item We identify the importance of appropriately scaling $\beta_2$ when using second-moment optimizer states, validating our findings through theoretical analysis and experiments. This reveals that the recently introduced IsoLoCo method~\citep{horoi2026isoloco}, using an Adam inner optimizer, can outperform the much more computationally expensive MuLoCo~\citep{therien2025muloco}.
    \item Our results demonstrate that using a SOAP~\citep{vyas2025soap} or SPlus~\citep{frans2025splus} inner optimizer can lead to very effective replica scaling , obtaining significantly improved performance relative to existing methods from the literature (as much as $6\%$ less degradation relative to DP at 96 replicas). %
    
\end{itemize}

\section{Related Work}

\paragraph{Local-step methods and DiLoCo.} Local SGD \citep{stich2019local} and
federated averaging \citep{mcmahan2017fedavg} reduce communication by taking
several local steps between averaging events. DiLoCo
\citep{douillard2023diloco} adapts this to LLM pretraining by utilizing a large number of inner steps, AdamW \citep{loshchilov2019adamw} as the inner optimizer, and SGD with Nesterov momentum on the aggregated pseudo-gradients as the outer optimizer. Follow-ups relax synchrony
\citep{liu2024async} and overlap or shard the exchange of information between replicas
\citep{douillard2025streaming}. \citet{charles2025scaling} establish scaling
laws for DiLoCo and document the central difficulty we study: adding replicas
at a fixed token budget degrades quality, with the optimal hyperparameters
shifting as $R$ grows. Our analysis complements these scaling laws by tracing
the degradation back to local state estimation and to the nonlinear optimizer
transforms applied before aggregation. Local methods can also be complemented with local optimization by using Sparse parameter averaging
\citep[SPARTA;][]{reed2025sparta} or Top-$k$ sparsification SparseLoCo \citep{sarfi2025sparseloco}. IsoLoco \cite{horoi2026isoloco} views DiLoCo's outer step as task-arithmetic model merging and replaces plain pseudo-gradient averaging with Iso-C merging~\citep{marczak2025isoc}. For each weight matrix, it takes the SVD of the averaged pseudo-gradient and replaces all singular values with their mean, giving $\bar{\sigma}UV^\top$. It then applies the usual Nesterov momentum outer step. This was shown to improve worker scaling over DiLoCo~\citep{horoi2026isoloco}. 

\paragraph{Small-batch training and adaptive-optimizer scaling rules.} Scaling
rules relate optimizer hyperparameters to batch size
\citep{malladi2022sde}; most directly, \citet{marek2025small} show that Adam
trains stably at very small batches provided $\beta_2$ is set so that the second
moment's half-life is fixed in \emph{tokens} rather than in steps. Replica
scaling in DiLoCo induces exactly this small-batch regime for each individual replica.
Section~\ref{rw:sec:h1} transfers the $\beta_2$ scaling rule to replica scaling; to our knowledge this hasn't been studied has not been studied in the local-step setting previously.

\paragraph{Spectral and second-order optimizers.} Shampoo
\citep{gupta2018shampoo} preconditions with Kronecker-factored curvature
estimates; SOAP \citep{vyas2025soap} runs Adam in Shampoo's eigenbasis; Muon
\citep{jordan2024muon} orthogonalizes matrix updates with no accumulated
second-moment statistics. SPlus \citep{frans2025splus} retains Shampoo's
historical eigenbasis but signs momentum in that basis, avoiding cached
eigenvalue magnitudes and SOAP's additional coordinate-wise second moment. SPlus
uses this instant-sign transformation independently in local DiLoCo replica
bases; our focus is its distributed covariance and replica-scaling behavior.
Controlled single-stream comparisons find the
ranking horizon-dependent \citep{wen2025fantastic}, while a recent
multi-billion-parameter study reports a slight edge for a stabilized SOAP over
Muon at very large batch, at the price of memory and numerical-stability
engineering \citep{khona2026soapmuon}. In the DiLoCo setting, MuLoCo
\citep{therien2025muloco} uses Muon as the inner optimizer and reports more
directionally consistent pseudo-gradients as $R$ grows, while IsoLoCo
\citep{horoi2026isoloco} treats outer aggregation as model merging and
isotropizes the aggregated pseudo-gradient. Section~\ref{rw:sec:theory}
develops a complementary criterion: whether the stochastic update geometry
emitted by each replica remains well conditioned under averaging.

\section{Background: the DiLoCo Framework}
\label{sec:background}

The methods in this paper can be seen as an instance of Algorithm~\ref{alg:diloco} which is a generalization of local optimization from \cite{reddi2021fedopt,douillard2023diloco} that utilizes both inner (on the local gradients) and outer optimization (on the pseudo-gradients). $R$
replicas start each round from the shared parameters, each runs $H$ steps of an
inner optimizer on its own data stream. The resulting displacements
$\bm{\Delta}_r$ are uniformly averaged, and an outer optimizer applies the aggregate $\bar{\bm{\Delta}}$ to the
shared parameters from the start of the round. DiLoCo \citep{douillard2023diloco} instantiates the inner
optimizer as AdamW and the outer as Nesterov momentum. For example MuLoCo
\citep{therien2025muloco} replaces the inner optimizer with Muon. IsoLoCo \citep{horoi2026isoloco} isotropizes
$\bar{\bm{\Delta}}$ before the outer step. The inner optimizer states $s_r$ are never
communicated: each is estimated locally from its replica's data stream.

\begin{algorithm}[H]
\caption{The DiLoCo framework. All methods studied here are instances of this algorithm.}
\label{alg:diloco}
\begin{algorithmic}[1]
\Require initial parameters $\bm{\theta}^{(0)}$; replicas $r\in\{1,\dots,R\}$; inner steps $H$;
inner optimizer $\textsc{Inner}$ with per-replica state $s_r$;
outer optimizer $\textsc{Outer}$ with state $S$; global batch size $B$;
\For{outer round $t=0,\dots,T-1$}
  \ForAll{replicas $r$ \textbf{in parallel}}
    \State $\bm{\theta}_r \gets \bm{\theta}^{(t)}$
    \For{$h=1,\dots,H$}
      \State $\bm{g} \gets \nabla_{\bm{\theta}}\loss(\bm{\theta}_r;x_{r,h})$
        \Comment{local minibatch: $B/R$ tokens}
      \State $(\bm{\theta}_r,\,s_r) \gets \textsc{Inner}(\bm{\theta}_r,\,\bm{g},\,s_r)$
        \Comment{$s_r$ is never synchronized}
    \EndFor
    \State $\bm{\Delta}_r \gets \bm{\theta}^{(t)}-\bm{\theta}_r$
  \EndFor
  \State $\bar{\bm{\Delta}} \gets \tfrac1R\sum_r\bm{\Delta}_r$ \Comment{uniform average}
  \State $(\bm{\theta}^{(t+1)},\,S) \gets \textsc{Outer}(\bm{\theta}^{(t)},\,\bar{\bm{\Delta}},\,S)$
    \Comment{Nesterov momentum in DiLoCo}
\EndFor
\end{algorithmic}
\end{algorithm}

\section{Inner Optimizers Impact on Replica Scaling}
\label{rw:sec:theory}

The transformations applied to local updates by local optimizers can have a significant impact on the final aggregated pseudo-gradient. We investigate how the choice of inner optimizer shapes the distribution of the resulting pseudo-gradients and their deviation from the corresponding non-local updates.  Section~\ref{rw:sec:maps} introduces the optimizers considered in this work, 
Section~\ref{rw:sec:decomposition} separates the update error (relative to non-local steps with the same base optimizer) into
a \textbf{mean mismatch} term and a \textbf{stochastic error} term. It shows how whitening controls the largest directional variance, and how matching covariance structures across replicas makes uniform averaging optimal.
Section~\ref{rw:sec:h1} specializes this framework to one step from common
state: under the stated Gaussian model, SPlus admits a scalar
calibration that matches the DP mean to first order in the global proposal's
signal-to-noise ratio.
Finally, Section~\ref{rw:sec:h1} bounds the Adam mean mismatch caused by
fresh-gradient second moments and explains how appropriately adjusting $\beta_2$ limits that
bound as local batches shrink. It then isolates a complementary advantage
of SPlus: retaining the estimated basis without relying on coordinatewise
scale estimates to control the covariance term. 

\subsection{Background on Popular Optimizers}
\label{rw:sec:maps}
In this section we present a number of popular optimizers using a unified notation to facilitate their common analysis and comparison. In particular, we include recent matrix-based optimizers such as Muon and those in the Shampoo family. The optimizers differ in how they retain or remove the magnitudes of a
momentum proposal before averaging across replicas. We write these
transformations explicitly so their mean and covariance can be studied in
the same framework.

Consider a matrix parameter with stochastic gradient $\bm{G}\in\R^{m\times n}$. An optimizer returns a direction
$\bm{U}$ and an updated state after observing $\bm{G}$; its parameter displacement is
$\bm{\Delta}=\eta\,\vecop(\bm{U})$. We use subscripts $t-1$ and $t$ for
the incoming and updated optimizer states, respectively, and suppress the step
index on the gradient and returned direction. The first-moment proposal is
\begin{equation}
 \bm{M}_t=\beta_1\bm{M}_{t-1}+(1-\beta_1)\bm{G}.
 \label{rw:eq:momentum}
\end{equation}
We compare the following matrix-update cores, using standard EMA momentum to
isolate their normalizations. For simplicity we omit standard deterministic
bias corrections and weight decay from the presentation. The matrix-size scale $\gamma$ of Muon is shown
explicitly. All decay coefficients lie in $[0,1)$; numerical damping is denoted by $\epsilon>0$.

\paragraph{Adam and Shampoo.}
Adam uses an entrywise second moment in parameter coordinates
\citep{kingma2015adam,loshchilov2019adamw}:
\begin{equation}
 \bm{V}_{A,t}=\beta_2\bm{V}_{A,t-1}+(1-\beta_2)\bm{G}^{\odot2},
 \qquad \bm{U}_{\rm Adam}=\bm{M}_t\oslash(\sqrt{\bm{V}_{A,t}}+\epsilon).
 \label{rw:eq:adam-map}
\end{equation}
Where $U$ is the update. Here, $\odot$ and $\oslash$ denote elementwise multiplication
and division, respectively, with $\bm{G}^{\odot 2}$
representing the elementwise square. Shampoo instead maintains left and right Gram factors~\citep{gupta2018shampoo}:
\begin{equation}
 \begin{split}
 \bm{L}_t&=\beta_3\bm{L}_{t-1}+(1-\beta_3)\bm{G}\bm{G}^\top,\qquad
 \bm{\mathsf R}_t=\beta_3\bm{\mathsf R}_{t-1}+(1-\beta_3)\bm{G}^\top \bm{G},\\
 \bm{U}_{\rm Shampoo}&=(\bm{L}_t+\epsilon \bm{I})^{-1/4}\bm{M}_t
                         (\bm{\mathsf R}_t+\epsilon \bm{I})^{-1/4}.
 \end{split}
 \label{rw:eq:shampoo-map}
\end{equation}
The above formulation is the canonical version of Shampoo. We note that some variants consider half-power instead of quarter~\citep{vyas2025soap}. We also note variants
with grafting or additional normalizations are often used, which can impact the interpretation of the underlying preconditioner~\citep{lin2026kl}.

\paragraph{SOAP and SPlus.}
Let $\bm{Q}_L,\bm{Q}_R$ be orthogonal eigenvector matrices obtained from the historical
Gram factors above. Their action is the matrix rotation
\begin{equation}
 \widetilde{\bm{G}}=\bm{Q}_L^\top \bm{G}\bm{Q}_R,\qquad
 \widetilde{\bm{M}}_t=\bm{Q}_L^\top \bm{M}_t\bm{Q}_R.
 \label{rw:eq:rotation}
\end{equation}
SOAP (ShampoO with Adam in the Preconditioner's eigenbasis) performs Adam-style normalization in this product basis \citep{vyas2025soap}:
\begin{equation}
 \begin{split}
 \bm{V}_{S,t}&=\beta_2\bm{V}_{S,t-1}+(1-\beta_2)\widetilde{\bm{G}}^{\odot2},\\
 \bm{U}_{\rm SOAP}&=\bm{Q}_L\big[\widetilde{\bm{M}}_t\oslash
                                  (\sqrt{\bm{V}_{S,t}}+\epsilon)\big]\bm{Q}_R^\top.
 \end{split}
 \label{rw:eq:soap-map}
\end{equation}
In this work, we consider a signed update in a Shampoo basis, a variant of the recently introduced SPlus \citep{frans2025splus} optimizer, which does not utilize iterate averaging. This optimizer retains the momentum and historical basis but signs each projected entry:
\begin{equation}
 \bm{U}_{\rm SPlus}=\bm{Q}_L\sign(\widetilde{\bm{M}}_t)\bm{Q}_R^\top,
 \qquad \sign(0)=0.
 \label{rw:eq:suds-map}
\end{equation}
This approach can be seen as a simplified version of SOAP that removes the dependence on the second moment. In the following sections, we will further illustrate how this optimizer naturally arises in the context of local optimization with replica scaling.
SOAP's rotated normalization and SPlus sign transform have a whitening
interpretation \citep{vyas2025soap,frans2025splus,lin2026kl}.
Section~\ref{rw:sec:decomposition} gives the corresponding covariance
identities under explicit assumptions and explains their impact on pseudo-gradient aggregation.

\paragraph{Muon.}
For a  singular value decomposition $\bm{M}_t=\bm{U}_M\bm{\Sigma}_M \bm{V}_M^\top$ on the
nonzero singular subspace, ideal Muon returns
\begin{equation}
 \bm{U}_{\rm Muon}=\gamma\polar(\bm{M}_t)=\gamma \bm{U}_M\bm{V}_M^\top.
 \label{rw:eq:muon-map}
\end{equation}
It replaces nonzero singular values by a common value
\citep{jordan2024muon}. In matched singular-vector coordinates,
$\bm{U}_M^\top \bm{M}_t\bm{V}_M=\bm{\Sigma}_M$, so at its core Muon also has the form
``rotate, remove magnitudes, rotate back.'' Unlike SPlus, its basis is determined
by the current proposal, not by historical covariance statistics. In practice, Newton-Schulz is used to approximate the polar map.

\subsection{Impact of Covariance Structure on Local Optimization}
\label{rw:sec:decomposition}

We consider the general local optimization setting (under any local optimizer) and separate the error relative to the expected DP update with the same optimizer into two terms: (1) \textbf{mean mismatch} and (2) \textbf{stochastic error}.
The covariance analysis in this subsection addresses the stochastic term;
Section~\ref{rw:sec:h1} then addresses the mean mismatch. Following the pseudo-gradient interpretation of client displacements in
\citet{reddi2021fedopt} and \citet{wang2024unreasonable}, we treat each replica's
complete local update as a random vector. All expectations and covariances below are conditional on the start-of-round parameters and the incoming optimizer states. Let $\bm{\Delta}_r\in\R^d$ be the complete $H$-step displacement returned
by replica $r$. We decompose each local displacement into its conditional mean and a zero-mean stochastic component:
\begin{equation}
 \bm{\Delta}_r=\bm{\mu}_r+\bm{\varepsilon}_r,\qquad
 \bm{\mu}_r=\E[\bm{\Delta}_r],\qquad \E[\bm{\varepsilon}_r]=0,\qquad
 \bm{C}_r=\Cov(\bm{\Delta}_r).
 \label{rw:eq:replica-model}
\end{equation}
Assume finite second moments, $\mathbb{E}[\|\Delta_r\|_2^2]<\infty$ for all $r$, and zero cross-replica noise covariances, $\operatorname{Cov}(\varepsilon_r,\varepsilon_s)=\bm{0}$ for all $r\neq s$.
Independent local data streams, conditional on the start-of-round parameters and optimizer states, are sufficient to ensure zero cross-replica noise covariances.
Set $\bar{\bm{\Delta}}=R^{-1}\sum_r\bm{\Delta}_r$ and $\bar{\bm{\mu}}=R^{-1}\sum_r\bm{\mu}_r$. Note that for the typical setting studied in \cite{douillard2023diloco,therien2025muloco} with i.i.d. data and identical incoming optimizer states, $\bm{\mu}_r$ is the same for all replicas. 

We now consider $\bm{d}_H^{\rm DP}$: the displacement of the \emph{same inner
optimizer} run for $H$ \emph{synchronized} steps and same global batch size, in expectation. We allow a deterministic scalar calibration factor $\alpha_R$, and define
\begin{equation}
 \bm{b}_R=\alpha_R\bar{\bm{\mu}}-\bm{d}_H^{\rm DP},\qquad
 \bm{C}_{\rm agg}=\Cov(\bar{\bm{\Delta}})=\frac1{R^2}\sum_r \bm{C}_r.
 \label{rw:eq:bias-and-cov}
\end{equation}
For $H=1$, $\alpha_R$ can represent a step-size change, for example. We now consider the following decomposition of the error in any direction and the corresponding bound: 

\begin{proposition}[Directional error decomposition]
\label{rw:prop:decomposition}
The conditional bias--variance decomposition gives, for every unit vector $\bm{q}$,
\begin{equation}
 \underbrace{\E\!\left[\big(\bm{q}^\top(\alpha_R\bar{\bm{\Delta}}-\bm{d}_H^{\rm DP})\big)^2\right]}
             _{\text{error relative to the DP mean}}
 =\underbrace{(\bm{q}^\top \bm{b}_R)^2}_{\text{mean mismatch}}
  +\underbrace{\alpha_R^2\bm{q}^\top \bm{C}_{\rm agg}\bm{q}}_{\text{stochastic error}}.
 \label{rw:eq:main-decomposition}
\end{equation}
Consequently, the largest directional error is
\begin{equation}
 \lambda_{\max}(\bm{b}_R\bm{b}_R^\top+\alpha_R^2\bm{C}_{\rm agg})
 \leq \|\bm{b}_R\|_2^2+\alpha_R^2\lambda_{\max}(\bm{C}_{\rm agg}).
 \label{rw:eq:worst-risk}
\end{equation}
\end{proposition}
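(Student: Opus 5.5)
The plan is to write the scalar error as a deterministic part plus a zero-mean random part, and to note that the cross term vanishes.

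Fix a unit vector $\bm{q}$. Since $\bm{d}_H^{\rm DP}$ and $\alpha_R$ are deterministic given the start-of-round conditioning, we can write
\[
\alpha_R\bar{\bm{\Delta}}-\bm{d}_H^{\rm DP}
=(\alpha_R\bar{\bm{\mu}}-\bm{d}_H^{\rm DP})+\alpha_R(\bar{\bm{\Delta}}-\bar{\bm{\mu}})
=\bm{b}_R+\alpha_R\bar{\bm{\varepsilon}},
\]
where $\bar{\bm{\varepsilon}}=R^{-1}\sum_r\bm{\varepsilon}_r$ and $\E[\bar{\bm{\varepsilon}}]=0$.

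\textbf{The decomposition \eqref{rw:eq:main-decomposition}.} Squaring $\bm{q}^\top(\bm{b}_R+\alpha_R\bar{\bm{\varepsilon}})$ and taking expectations (finite by the second-moment assumption) gives three terms.
\begin{itemize}
\item The first is $(\bm{q}^\top\bm{b}_R)^2$.
\item The cross term is $2\alpha_R(\bm{q}^\top\bm{b}_R)\,\bm{q}^\top\E[\bar{\bm{\varepsilon}}]$, which is zero.
\item The last is $\alpha_R^2\bm{q}^\top\E[\bar{\bm{\varepsilon}}\bar{\bm{\varepsilon}}^\top]\bm{q}=\alpha_R^2\bm{q}^\top\Cov(\bar{\bm{\Delta}})\bm{q}$.
\end{itemize}
To identify $\Cov(\bar{\bm{\Delta}})$ with $\bm{C}_{\rm agg}$, expand
\[
\Cov(\bar{\bm{\Delta}})=R^{-2}\sum_{r,s}\Cov(\bm{\varepsilon}_r,\bm{\varepsilon}_s).
\]
The assumption of zero cross-replica covariance kills every term with $r\neq s$, leaving $R^{-2}\sum_r\bm{C}_r=\bm{C}_{\rm agg}$, which matches \eqref{rw:eq:bias-and-cov}.

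\textbf{The bound \eqref{rw:eq:worst-risk}.} The decomposition can be rewritten as $\bm{q}^\top(\bm{b}_R\bm{b}_R^\top+\alpha_R^2\bm{C}_{\rm agg})\bm{q}$. This matrix is symmetric positive semidefinite, so by Rayleigh--Ritz its maximum over unit $\bm{q}$ is its largest eigenvalue, which is the left-hand side. For the inequality, take the supremum of each summand separately:
\[
\max_{\bm{q}}(\bm{q}^\top\bm{b}_R)^2=\|\bm{b}_R\|_2^2\ \text{(Cauchy--Schwarz)},\qquad
\max_{\bm{q}}\bm{q}^\top\bm{C}_{\rm agg}\bm{q}=\lambda_{\max}(\bm{C}_{\rm agg}).
\]
Since the maximum of a sum is at most the sum of the maxima (Weyl's inequality), the bound follows.

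There is no real obstacle here. The only point needing care is the conditioning: $\bm{d}_H^{\rm DP}$ is defined as an expectation, hence deterministic under the conditioning, and this is what makes the cross term vanish.
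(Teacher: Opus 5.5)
Your proof is correct and follows the same route as the paper's: add and subtract $\alpha_R\bar{\bm\mu}$ so the cross term vanishes, identify the variance term with $\bm C_{\rm agg}$, then use the Rayleigh quotient together with $\lambda_{\max}(\bm A+\bm B)\le\lambda_{\max}(\bm A)+\lambda_{\max}(\bm B)$ to get the bound. You spell out steps the paper leaves implicit, namely that $\bm d_H^{\rm DP}$ and $\alpha_R$ are deterministic under the conditioning and that the zero cross-replica covariance assumption reduces $\Cov(\bar{\bm\Delta})$ to $R^{-2}\sum_r\bm C_r$.
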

\begin{proof}
Subtract and add $\alpha_R\bar{\bm{\mu}}$. Using the Rayleigh quotient gives
\eqref{rw:eq:worst-risk}.
\end{proof}

The second term allows us to analyze the effect of inner optimizers that induce whitened updates. %
As we show below, whitening helps control the stochastic error by bounding the largest directional variance of the aggregated pseudo-gradient, while also promoting covariance matching across replicas.
The mean mismatch term is more challenging to analyze; however in Section~\ref{rw:sec:h1} we derive explicit expressions and bounds for several inner optimizers under a single-local-step setting. Moreover we will show in the empirical results that this term is generally dominated in the largest noise direction.

\paragraph{Whitening and directional variance.}
A linear map $\bm{P}$ whitens a covariance $\bm{\Sigma}$ if $\bm{P}\bm{\Sigma} \bm{P}^\top=\bm{I}$
\citep{kessy2018whitening}; a prescribed scalar multiple gives a spherical
covariance at another scale. For any covariance $\bm{C}$ of dimension $d$ and trace $\tau$ (fixed total variance), the following inequality holds:
\begin{equation}
 \lambda_{\max}(\bm{C})\geq\tau/d,
 \quad\text{with equality precisely when }\bm{C}=(\tau/d)\bm{I}.
 \label{rw:eq:isotropic-minimax}
\end{equation}
Thus, isotropy minimizes the \emph{largest} directional variance
at fixed total variance, precisely the covariance quantity
appearing in \eqref{rw:eq:worst-risk}.
Moreover, a bound on the maximum directional variance of each
replica carries over to the aggregate:
\[
\lambda_{\max}(\bm{C}_{\rm agg})
\leq \frac{1}{R^2}\sum_r\lambda_{\max}(\bm{C}_r).
\]
Consequently, whitening local pseudo-gradients toward a common
isotropic covariance controls the stochastic error.

\paragraph{Whitening and optimality of uniform averaging.} Local update whitening has an additional, pre-aggregation benefit when inner
optimizer trajectories produce different update distributions. For the stochastic term
alone, a weighted aggregate has directional variance
\[
\sum_{r=1}^R w_r^2\,\bm q^\top\bm C_r\bm q,
\qquad \sum_r w_r=1.
\]
Uniform weights, which are used by default in standard DiLoCo settings, minimize this quantity when the replicas have equal
directional variances~\citep{aitken1935least}; they do so in every direction when
$\bm C_1=\cdots=\bm C_R$. Whitening each replica toward the same spherical
target $\nu\bm I$ promotes precisely this agreement, since
$
\|\bm C_i-\bm C_j\|
\leq
\|\bm C_i-\nu\bm I\|
+
\|\bm C_j-\nu\bm I\|.
$

Thus, local whitening both controls the worst-direction aggregate noise and makes the default uniform averaging well suited to the individual replica noise geometries. However, it does not control the mean mismatch $\bm b_R$, which is analyzed separately in Section~\ref{rw:sec:h1}.

\subsection{Impact of Optimizer Transformations on Update Covariance}
In this subsection, we analyze how the transformations induced by different inner optimizers affect the covariance structure of their updates under idealized assumptions, with particular emphasis on their whitening properties.
This comparison is motivated by the
eigenbasis interpretation of SOAP \citep{vyas2025soap}, the whitening
motivation of \cite{frans2025splus}, and the covariance-estimation analysis
of \cite{lin2026kl}.

For this comparison, consider a random matrix proposal
$\bm M = \bm M_t \in \mathbb R^{m\times n}$, with
$\bm{\Sigma} = \Cov(\vecop(\bm M)) \succ 0$.
Let $\bm U$ denote the resulting update matrix, and define
$d=mn$ and $\bm C=\Cov(\vecop(\bm U))$.
We treat optimizer bases and linear preconditioners as fixed independently of the sampled proposal, and define ideal normalization using the proposal's \emph{centered} covariance. Write
\[
 \bm{Q}=\bm{Q}_R\otimes \bm{Q}_L,
 \qquad \vecop(\bm{Q}_L^\top \bm{M}\bm{Q}_R)=\bm{Q}^\top\vecop(\bm{M}).
\]

We now examine how each optimizer transforms the covariance of the stochastic proposal and under which conditions the resulting update covariance is isotropic.

\paragraph{Adam.}
Suppose each entry of $\bm M$ is divided by its true standard deviation.
Writing $\bm D=\operatorname{diag}(\Sigma_{11},\ldots,\Sigma_{dd})$, this gives
\[
 \vecop(\bm U)=\bm D^{-1/2}\vecop(\bm M),\qquad
 \bm C=\bm D^{-1/2}\bm\Sigma\bm D^{-1/2}. 
\]
Every diagonal entry of $\bm C$ is one, but
$C_{ij}=\Sigma_{ij}/\sqrt{\Sigma_{ii}\Sigma_{jj}}$.
Thus, even with access to the true coordinate variances, Adam's normalization preserves the correlations present in $\bm M$.

\paragraph{Shampoo.}
Consider the idealized case where the factors equal the population Gram
matrices of a zero-mean input:
$\bm L=\mathbb E[\bm M\bm M^\top]$ and
$\bm{\mathsf R}=\mathbb E[\bm M^\top\bm M]$.
Assume the covariance is separable,
$\bm\Sigma=(\bm{\mathsf R}\otimes\bm L)/\operatorname{tr}(\bm L)$,
with positive-definite factors. The canonical Shampoo transformation then gives:
\[
 \bm U=\bm L^{-1/4}\bm M\bm{\mathsf R}^{-1/4},\qquad
 \bm C=\frac{\bm{\mathsf R}^{1/2}\otimes\bm L^{1/2}}
 {\operatorname{tr}(\bm L)}.
\]
The condition number of the resulting covariance, i.e., the ratio of its largest to smallest eigenvalues, is the square root of that of the input covariance. Thus, canonical Shampoo does not generally whiten the updates, as also discussed in \cite{lin2026kl}.

\paragraph{SOAP.}
Suppose the fixed rotation
$\widetilde{\bm M}_t=\bm Q_L^\top\bm M_t\bm Q_R$ gives
uncorrelated coordinates, and let $\bm V_{S,t}$ denote their true variances:
$(\bm V_{S,t})_{ij}=\operatorname{Var}((\widetilde{\bm M}_t)_{ij})$.
Dividing each rotated entry by its true standard deviation and rotating back gives
\[
 \bm U=\bm Q_L[\widetilde{\bm M}_t\oslash\sqrt{\bm V_{S,t}}]\bm Q_R^\top,
 \qquad \bm C=\bm I.
\]
The resulting variance is one in every unit direction, not only along the
original coordinates. Thus, SOAP can whiten the update covariance when the input covariance has the assumed Kronecker structure, the rotation $\bm Q$ decorrelates the proposal coordinates, and their variances $\bm V_{S,t}$ are known exactly.

\paragraph{SPlus.}
Suppose $\vecop(\bm M)$ is zero-mean Gaussian and the fixed rotation above
gives uncorrelated entries. These entries are then independent, and their
signs are independent, equally likely $+1$ and $-1$ values. Therefore
\[
 \bm U=\bm Q_L\operatorname{sign}(\widetilde{\bm M}_t)\bm Q_R^\top,
 \qquad \bm C=\bm I.
\]
Unlike SOAP, this transformation does not require estimating the coordinate standard deviations.

\paragraph{SPlus vs.\ SOAP Under Imperfect Decorrelation.}
The preceding analyses of SOAP and SPlus assume that the rotation decorrelates the entries of $\bm M_t$, with SOAP additionally having access to the true coordinate standard deviations. We can get further insight
by considering an imperfect rotation: fix the estimated $\bm Q_L,\bm Q_R$
before sampling $\vecop(\bm M_t)\sim\mathcal N(\bm 0,\bm\Sigma)$, and let
$\bm{\mathcal C}$ be the correlation matrix of the entries of
$\widetilde{\bm M}_t=\bm Q_L^\top\bm M_t\bm Q_R$.
To characterize how SPlus transforms these correlations, we apply the Gaussian arcsine law \citep{dirksen2022onebit}, which gives the covariance after taking signs. Rotating back yields
\begin{equation}
 \bm{C}_{\rm SPlus}=\bm{Q}\left[\frac2\pi\arcsin(\bm{\mathcal C})\right]\bm{Q}^\top,
 \qquad
 \left|\frac2\pi\arcsin(\rho)\right|\leq|\rho|.
 \label{rw:eq:arcsine}
\end{equation}
Here $\arcsin$ acts entrywise: the signs have unit variance in the rotated basis, and residual
correlations contract, but need not vanish.

With the same rotation and exact coordinate standard deviations, SOAP instead gives
$\bm C_{\rm SOAP}=\bm Q\bm{\mathcal C}\bm Q^\top$:
coordinate normalization leaves these correlations unchanged.
For both methods, the covariance in the rotated basis has unit diagonal.
Its squared Frobenius distance to identity is therefore just the sum of the
squared off-diagonal entries. Rotating back preserves this distance, yielding
\[
 \begin{aligned}
 \|\bm C_{\rm SPlus}-\bm I\|_F^2
 &=\sum_{i\ne j}\left[\frac2\pi\arcsin(\mathcal C_{ij})\right]^2\\
 &\leq\sum_{i\ne j}\mathcal C_{ij}^2
 =\|\bm C_{\rm SOAP}-\bm I\|_F^2.
 \end{aligned}
\]
Under these assumptions, SPlus produces an update covariance \emph{at least as close to the identity} in Frobenius norm as SOAP, even when the rotation does not perfectly decorrelate the proposal: its sign transformation contracts residual correlations without requiring coordinatewise variance estimates.

\paragraph{Muon.}
Suppose $\bm M$ is full rank almost surely, and its distribution is unchanged
by any orthogonal transformation $\bm M\mapsto\bm O\bm M$ when $m\geq n$
(or $\bm M\mapsto\bm M\bm O$ when $m<n$).
For the exact polar transformation with a fixed scale $\gamma>0$,
\[
 \bm U=\gamma\operatorname{polar}(\bm M),\qquad
 \bm C=\frac{\gamma^2}{\max(m,n)}\bm I.
\]
Thus, Muon produces an isotropic update covariance under strong distributional symmetry assumptions, but its polar transformation does not generally guarantee isotropy.

\paragraph{Relation to MuLoCo}
MuLoCo studies how Muon's normalized matrix updates affect pseudo-gradient aggregation \citep{therien2025muloco}. It argues that large singular values in individual update matrices contribute to misalignment during aggregation, an issue mitigated by using Muon as the inner optimizer. In contrast, our work examines the covariance structure of Muon's updates, a distribution-level property that provides a complementary perspective and highlights potential limitations of Muon's normalization. Furthermore, our error decomposition directly connects this covariance structure to the stochastic component of the error relative to the expected DP update.

\paragraph{Beyond one local step.}

Equations~\eqref{rw:eq:main-decomposition} and~\eqref{rw:eq:worst-risk}
apply to the complete displacement after any number of inner optimization
steps $H$. However, if $\bm{\Delta}_{r,h}$ denotes the displacement
at the $h$-th local step, the covariance of replica $r$'s complete
displacement is given by
\begin{equation*}
 \bm{C}_r=\sum_{h=1}^H\Cov(\bm{\Delta}_{r,h})
       +\sum_{h\ne k}\Cov(\bm{\Delta}_{r,h},\bm{\Delta}_{r,k}).
\end{equation*}
Consequently, whitening individual (one-step) local updates does not
by itself whiten the complete pseudo-gradient, as temporal covariances
between successive updates can introduce anisotropy.
Approximate isotropy of the complete pseudo-gradient can nevertheless
be maintained when the aggregate contribution of these temporal
covariances is sufficiently small.

\subsection{Single-Step Mean Mismatch and Second-Moment Estimation}
\label{rw:sec:h1}
We now consider the single-step setting ($H=1$), where all replicas
share the same initial parameters and optimizer states, and analyze
the mean mismatch term of the decomposition introduced in
Section~\ref{rw:sec:decomposition}.
This eliminates multi-step local drift, allowing us to study
the remaining mean mismatch $\bm{b}_R$ more easily.
At fixed global batch size $B$, let
$\widehat{\bm{g}}_r\coloneqq\vecop(\bm{G}_r)$
denote the independent and identically distributed replica gradients,
and define
\begin{equation}
 \bm{g}\coloneqq\E[\widehat{\bm{g}}_r],
 \qquad
 \bar{\bm{g}}\coloneqq\frac1R\sum_r\widehat{\bm{g}}_r,
 \qquad
 \bm{\Sigma}_g\coloneqq\Cov(\bar{\bm{g}}).
 \label{rw:eq:batch-model}
\end{equation}
Then $\Cov(\widehat{\bm{g}}_r)=R\bm{\Sigma}_g$.
For IID examples, $\bm{\Sigma}_g=\bm{\Omega}/B$, where $\bm{\Omega}$ is the per-example
covariance. Let $T(\widehat{\bm{g}};s)$ denote the parameter-update map that takes a gradient $\hat{\bm{g}}$ and incoming optimizer state $s$ and returns the corresponding parameter displacement. Then
\begin{equation}
 \bar{\bm{\Delta}}=\frac1R\sum_rT(\widehat{\bm{g}}_r;s),\qquad
 \bm{d}_1^{\rm DP}=\E[T(\bar{\bm{g}};s)],\qquad
 \bm{b}_R=\alpha_R\E[\bar{\bm{\Delta}}]-\bm{d}_1^{\rm DP}.
 \label{rw:eq:h1-reference}
\end{equation}
In this single-step setting, DP applies the optimizer transformation
$T$ to the averaged gradient $\bar{\bm{g}}$, whereas DiLoCo applies
$T$ independently to each replica's gradient before averaging the
resulting updates.
While an affine $T$ commutes with averaging sample-by-sample, sharing the incoming optimizer states does not generally make an adaptive optimizer's update transformation affine.

We first analyze SPlus, whose sign-based transformation permits explicit expressions for the mean and covariance of its updates.

\paragraph{SPlus: a direct bound on the mean term.}
Fix the shared SOAP basis before the step; this covers a step between refreshes
or a lagged-basis implementation. Momentum is still updated by
\eqref{rw:eq:momentum}. The average of the replica momenta is exactly the DP
momentum from the same incoming state. Let $\widetilde{m}_{r,t,j}$ denote entry
$j$ of $\vecop(\widetilde{\bm{M}}_{r,t})$, for $j=1,\ldots,d=mn$, and assume
\begin{equation}
 \widetilde{m}_{r,t,j}=\sigma_j(a_j+\sqrt R\,Z_{r,j}),
 \qquad Z_{r,j}\stackrel{\rm iid}{\sim}\mathcal N(0,1),\quad \sigma_j>0.
 \label{rw:eq:sign-model}
\end{equation}
Thus $a_j$ is the mean divided by the standard deviation of the \emph{DP}
momentum coordinate, not its replica-local SNR. We use unit step scale in the
following proposition and restore other deterministic scales multiplicatively.

\begin{proposition}[SPlus mean and covariance relative to DP]
\label{rw:prop:suds-dp}
Under \eqref{rw:eq:sign-model}, set $\bm{a}=(a_j)_{j=1}^d$,
$\psi(t)=2\Phi(t)-1$, and $c=\sqrt{2/\pi}$. With $\alpha_R=\sqrt R$,
\begin{equation}
 \begin{split}
 \bm{b}_R&=\bm{Q}\big[\sqrt R\,\psi(\bm{a}/\sqrt R)-\psi(\bm{a})\big],\\
 \|\bm{b}_R\|_2^2&\leq\frac{c^2}{36}\left(1-\frac1R\right)^2\sum_j a_j^6,\\
 \Cov(\sqrt R\,\bar{\bm{\Delta}})&=
 \bm{Q}\operatorname{Diag}\big(1-\psi(a_j/\sqrt R)^2\big)\bm{Q}^\top\preceq \bm{I}.
 \end{split}
 \label{rw:eq:suds-dp-bound}
\end{equation}
Here $\psi$ acts entrywise. In particular, every directional error in
\eqref{rw:eq:main-decomposition} is at most
$1+\frac{c^2}{36}(1-R^{-1})^2\sum_j a_j^6$.
\end{proposition}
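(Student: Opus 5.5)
The proof is a coordinatewise computation in the rotated basis, followed by a Taylor bound for the mean and a diagonal argument for the covariance.

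\textbf{Reduction to rotated coordinates.} Since $\bm Q$ is fixed and orthogonal, the update is
\[
\vecop(\bm U_r)=\bm Q\,\sign(\vecop(\widetilde{\bm M}_{r,t})).
\]
The DP update applies the same map to the averaged momentum. That average has entries $\sigma_j(a_j+\bar Z_j)$ with $\bar Z_j=R^{-1/2}\sum_r Z_{r,j}\sim\mathcal N(0,1)$.

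\textbf{Mean terms.} For $X\sim\mathcal N(\mu,s^2)$,
\[
\E[\sign X]=P(X>0)-P(X<0)=2\Phi(\mu/s)-1=\psi(\mu/s).
\]
For a replica, $\mu/s=a_j/\sqrt R$, because $\sigma_j>0$ drops out of the sign. For DP, $\mu/s=a_j$. Hence
\[
\E[\bar{\bm\Delta}]=\bm Q\,\psi(\bm a/\sqrt R),\qquad \bm d_1^{\rm DP}=\bm Q\,\psi(\bm a).
\]
Multiplying the first by $\alpha_R=\sqrt R$ gives the stated formula for $\bm b_R$. Orthogonality of $\bm Q$ then gives
\[
\|\bm b_R\|_2^2=\sum_j f_R(a_j)^2,\qquad f_R(a)=\sqrt R\,\psi(a/\sqrt R)-\psi(a).
\]

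\textbf{Bounding $f_R$ (the main step).} Write $\psi(t)=c\int_0^t e^{-u^2/2}\,du$, so $\psi(t)=ct+\rho(t)$ with $\rho(0)=\rho'(0)=\rho''(0)=0$ and $\rho'''(t)=\psi'''(t)=c(t^2-1)e^{-t^2/2}$. Because $|(t^2-1)e^{-t^2/2}|\le 1$ (its maximum absolute value $1$ is attained at $t=0$), Taylor's theorem gives
\[
\rho(t)=-\tfrac{c}{6}t^3+\text{(remainder)}.
\]
The linear parts of $f_R$ cancel: $\sqrt R\,c\,a/\sqrt R-ca=0$. So
\[
f_R(a)=\sqrt R\,\rho(a/\sqrt R)-\rho(a).
\]
The naive bound $|\rho(t)|\le c|t|^3/6$ gives only $(1+R^{-1})$ rather than $(1-R^{-1})$. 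To get the sharper factor I would use an integral representation. Set $h(s)=s^{-1}\rho(sa)$ for $s>0$, so that $f_R(a)=h(1/\sqrt R)-h(1)$. Then
\[
h'(s)=\frac{sa\rho'(sa)-\rho(sa)}{s^2}.
\]
Let $k(t)=t\rho'(t)-\rho(t)$. It satisfies $k'(t)=t\rho''(t)$, and $|\rho''(t)|\le c t^2/2$ by integrating the third-derivative bound twice. Hence $|k'(t)|\le c|t|^3/2$, which gives $|k(t)|\le c t^4/8$ and $|h'(s)|\le c\,s^2a^4/8$. Integrating over $s\in[R^{-1/2},1]$ gives
\[
|f_R(a)|\le \frac{c\,a^4}{24}\bigl(1-R^{-3/2}\bigr).
\]
This has the right flavor but the wrong power of $a$. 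To obtain the stated $a^3$ bound, I would instead use a direct monotone/convexity argument on $g(t)=\psi(t)/t$ evaluated at $t=a$ versus $t=a/\sqrt R$. Concretely, bound $|g(a)-g(a/\sqrt R)|\le \tfrac{c}{6}\,a^2(1-1/R)$ using $|g'(t)|\le c|t|/3$, which follows from $g(t)=c\int_0^1 e^{-t^2v^2/2}\,dv$. Multiplying by $|a|$ gives
\[
|f_R(a)|\le \tfrac{c}{6}(1-R^{-1})|a|^3.
\]
Squaring and summing over $j$ yields the stated bound on $\|\bm b_R\|_2^2$. Checking the constant in $|g'|$, where the integral gives exactly $c|t|/3$, is the step I expect to need care.

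\textbf{Covariance.} The replicas are independent and share the same law. In the rotated basis, the coordinates $Z_{r,j}$ are independent across $j$, so the signs are independent across $j$ and their covariance is diagonal. Each coordinate has variance
\[
\E[\sign^2]-\psi^2=1-\psi(a_j/\sqrt R)^2,
\]
using that $\sign=0$ has probability zero. Then
\[
\Cov(\sqrt R\,\bar{\bm\Delta})=R\cdot R^{-2}\cdot R\cdot\bm Q\,\mathrm{Diag}\bigl(1-\psi(a_j/\sqrt R)^2\bigr)\bm Q^\top\preceq \bm I.
\]

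\textbf{Final directional error.} Substitute into Proposition~\ref{rw:prop:decomposition}. For unit $\bm q$, we have $(\bm q^\top\bm b_R)^2\le\|\bm b_R\|^2$ and $\alpha_R^2\bm q^\top\bm C_{\rm agg}\bm q=\bm q^\top\Cov(\sqrt R\,\bar{\bm\Delta})\bm q\le 1$, which gives the stated bound $1+\frac{c^2}{36}(1-R^{-1})^2\sum_j a_j^6$.
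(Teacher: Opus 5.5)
Your final argument is correct and is essentially the paper's proof. With $u=av$, your representation $f_R(a)=a\,[g(a/\sqrt R)-g(a)]$, where $g(t)=c\int_0^1 e^{-t^2v^2/2}\,dv$, is exactly the paper's $c\int_0^a(e^{-u^2/(2R)}-e^{-u^2/2})\,du$, and integrating $|g'(t)|\le c|t|/3$ over $[a/\sqrt R,a]$ is the paper's unit-Lipschitz bound on $e^{-x}$ with the order of integration swapped; the covariance and directional-error steps coincide with the paper's.

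One correction to the discarded detour: $|\rho''(t)|=c|t|e^{-t^2/2}\le c|t|$, not $ct^2/2$. So the $a^4$ bound you wrote is false for small $a$, where $f_R(a)\approx\tfrac{c}{6}(1-R^{-1})a^3$. With the correct bound, your $h(s)$ route also gives exactly $\tfrac{c}{6}(1-R^{-1})|a|^3$.
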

The proof is given in Appendix~\ref{rw:app:suds-dp}. 
For a small vector $\bm{a}$, the mean mismatch is third order and its squared
contribution is sixth order. At $\bm{a}=0$, the calibrated local aggregate and DP
have the same mean and covariance, although their distributions differ. The
$\sqrt R$ calibration makes the stochastic term order one; it does not retain
an additional $1/R$ variance gain at fixed global batch.

\paragraph{Mean mismatch for Adam} We now bound the mean mismatch $\bm{b}_R$ for Adam.
At fixed global batch size, the local gradient variance is $R$ times that of
the DP average. Fresh-gradient second moments therefore change the
normalization even from a common incoming state. Proposition~\ref{rw:prop:beta-mean}
bounds the resulting \textbf{mean-update error}; Appendix~\ref{rw:app:state-and-curvature}
gives the second-moment identities and the Gaussian variance-differentiation
argument \citep{voigtlaender2021price}.

\begin{proposition}[Second-moment memory controls a one-step mean bound]
\label{rw:prop:beta-mean}
In a fixed parameter coordinate $j$, let the incoming Adam state be $(m,v)$, with
$v>0$ and $0<\beta_2<1$. Omit $\epsilon$, bias correction, and the learning rate,
and define the complete same-step scalar map
\begin{equation}
 T(x;m,v)=\frac{\beta_1m+(1-\beta_1)x}
                  {\sqrt{\beta_2v+(1-\beta_2)x^2}}.
 \label{rw:eq:adam-scalar}
\end{equation}
Suppose $\widehat{g}_r\sim\mathcal N(g,R\sigma^2)$ independently, so
$\bar{g}\sim\mathcal N(g,\sigma^2)$. With $\alpha_R=1$, write $b_{R,j}$ for this coordinate of the
bias $\bm{b}_R$ from \eqref{rw:eq:h1-reference}. It obeys
\begin{equation}
 |b_{R,j}|\leq
 \frac{(1-\beta_2)(R-1)\sigma^2}
      {2(\beta_2v)^{3/2}}
 \left(|\beta_1m|+3(1-\beta_1)|g|\right).
 \label{rw:eq:beta-mean-bound}
\end{equation}
For SOAP with a basis fixed before the fresh gradients, the same bound applies
to the corresponding coordinate of $\bm{Q}^\top\bm{b}_R$.
\end{proposition}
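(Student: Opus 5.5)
The key observation is that, with $\alpha_R=1$, the coordinate bias is
\[
b_{R,j}=\E[T(\widehat g_r;m,v)]-\E[T(\bar g;m,v)],
\]
because all replicas share the state $(m,v)$ and the gradients $\widehat g_r$ are identically distributed. Both expectations are Gaussian expectations of the same smooth function $T$. They share the mean $g$ and differ only in variance: $R\sigma^2$ versus $\sigma^2$. I would therefore define
\[
F(s)=\E_{X\sim\mathcal N(g,s)}[T(X)]
\]
and write $b_{R,j}=F(R\sigma^2)-F(\sigma^2)=\int_{\sigma^2}^{R\sigma^2}F'(s)\,ds$. The Gaussian variance-differentiation identity (the heat equation for the Gaussian kernel, as in \citet{voigtlaender2021price}) gives $F'(s)=\tfrac12\E[T''(X)]$. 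It follows that
\[
|b_{R,j}|\le \tfrac12 (R-1)\sigma^2\,\sup_x|T''(x)|.
\]
The proof thus reduces to a uniform bound on $|T''|$ over $x\in\R$. The identity is justified because $T$ is smooth with bounded derivatives: the denominator is at least $\sqrt{\beta_2 v}>0$.

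Next I would bound $T''$. Write
\[
N(x)=\beta_1m+(1-\beta_1)x,\qquad D(x)=\beta_2v+(1-\beta_2)x^2,\qquad T=N D^{-1/2}.
\]
Since $N''=0$,
\[
T''=2N'(D^{-1/2})'+N(D^{-1/2})''.
\]
With $c=1-\beta_2$ and $w=\beta_2v$, direct differentiation gives
\[
(D^{-1/2})'=-c\,x\,D^{-3/2},\qquad (D^{-1/2})''=-c\,D^{-3/2}+3c^2x^2D^{-5/2}=c\,D^{-5/2}(2cx^2-w).
\]
Each of these terms carries a factor $(1-\beta_2)$, which is the source of the $(1-\beta_2)$ prefactor in the claim.

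The bounds then go term by term. For the $N'$ term, $|x|D^{-3/2}\le D^{-1}\cdot |x|D^{-1/2}$ and $|x|\le D^{1/2}/\sqrt c$. Careful accounting should give a bound by $w^{-3/2}$ times a constant, after the $1/\sqrt c$ is absorbed using $c x^2\le D$. For the $N$ term, I would split $N=\beta_1 m+(1-\beta_1)x$. The constant part gives $|\beta_1 m|\,c\,|2cx^2-w|D^{-5/2}\le |\beta_1m|\,c\cdot 2D\cdot D^{-5/2}$. The $x$ part gives terms like $c^{3/2}|x|^3D^{-5/2}$, which need the same homogeneity trick.

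\textbf{The main obstacle} is that $|x|$ appears with no $|g|$ in front of it. The target bound contains $3(1-\beta_1)|g|$, not $\sup|x|$, so a naive sup bound on $T''$ cannot produce the $|g|$ dependence. I would therefore not take the supremum directly. Instead I would keep $\E[T''(X)]$, write $x=g+(x-g)$, and exploit that $x\,\phi(x)$-type terms odd in $x$ have structure. Alternatively, and more simply, I would bound the pieces as:
\begin{itemize}
\item $|N'(D^{-1/2})'|$ and the $(1-\beta_1)x\,(D^{-1/2})''$ piece, by $(1-\beta_1)c\cdot O(w^{-3/2})\cdot|x|/\sqrt{\cdot}$ combinations;
\item the $\beta_1 m$ piece, by $|\beta_1m|\,c\,w^{-3/2}$ via $D\ge w$.
\end{itemize}
If the clean $3|g|$ coefficient does not emerge, I would instead apply Gaussian integration by parts once more, moving a derivative onto the density so that $\E[(x-g)\cdot\,]$ terms cancel, leaving $g$ times bounded factors. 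The count of three such terms would then be the origin of the constant 3.

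Finally, for SOAP with a fixed basis $\bm Q$, the rotated gradient coordinates are again Gaussian with mean $(\bm Q^\top\bm g)_j$ and variances scaled by $R$ between local and DP. The SOAP update in the rotated basis is coordinatewise the same map $T$. Hence the same argument applies verbatim to $(\bm Q^\top\bm b_R)_j$.
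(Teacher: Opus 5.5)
Your reduction is the same as the paper's. With $\alpha_R=1$ the bias is $F(R\sigma^2)-F(\sigma^2)$, and Gaussian variance differentiation turns it into $\tfrac{\sigma^2}{2}\int_1^R\E[T''(g+\sqrt t\,\sigma Z)]\,dt$. The gap is exactly the step you flag as the main obstacle, and that step is the whole content of the bound.

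Write $a_0=\beta_2 v$, $b_0=1-\beta_2$, $D=a_0+b_0x^2$, and split $T=\beta_1 m\,h+(1-\beta_1)\ell$ with $h=D^{-1/2}$ and $\ell(x)=xD^{-1/2}$. Your two $x$-dependent pieces, $2N'(D^{-1/2})'+(1-\beta_1)x(D^{-1/2})''$, are together just $(1-\beta_1)\ell''$, where $\ell''(x)=-3a_0b_0\,x\,D^{-5/2}$. Its supremum is $\tfrac32(4/5)^{5/2}\sqrt{b_0}/a_0\approx0.86\sqrt{b_0}/a_0$. No constant multiple of $b_0/a_0^{3/2}$ bounds this: the ratio is $\sqrt{a_0/b_0}$, and such a claim also breaks the scaling $x,g,m\mapsto\lambda x,\lambda g,\lambda m$, $v\mapsto\lambda^2 v$, under which $T''$ scales like $\lambda^{-2}$ but $b_0/a_0^{3/2}$ like $\lambda^{-3}$. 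So ``careful accounting'' cannot give a multiple of $w^{-3/2}$. Any argument through $\sup_x|T''(x)|$ therefore yields, for the $(1-\beta_1)$ part, a bound proportional to $(R-1)\sigma^2\sqrt{1-\beta_2}/(\beta_2 v)$ with no factor $|g|$. Under $1-\beta_2(R)=O(1/R)$ that still grows like $\sqrt R$, which defeats the purpose of the proposition. Bounding the two pieces separately, as in your bullet list, also destroys the cancellation you actually need.

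The missing idea is oddness combined with a Lipschitz estimate. The shift by $g$ must be applied to the argument of the whole odd function $\ell''$, not to the linear factor $x$ inside it. Since $\ell''$ is odd and $\sqrt t\,\sigma Z$ is symmetric, $\E[\ell''(\sqrt t\,\sigma Z)]=0$, and hence
\[
\bigl|\E[\ell''(g+\sqrt t\,\sigma Z)]\bigr|=\bigl|\E[\ell''(g+\sqrt t\,\sigma Z)-\ell''(\sqrt t\,\sigma Z)]\bigr|\le|g|\,\|\ell'''\|_\infty .
\]
From $\ell'''(x)=-3a_0b_0(a_0-4b_0x^2)D^{-7/2}$ and $\sup_{y\ge0}|1-4y|/(1+y)^{7/2}=1$ you get $\|\ell'''\|_\infty=3b_0/a_0^{3/2}$. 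This is where the constant $3$ comes from, not from counting integration-by-parts terms. Your integration-by-parts fallback, as described, moves a derivative onto the density and produces factors $(x-g)/(t\sigma^2)$ rather than $g$, so it does not reach the stated form.

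A smaller point: for the $\beta_1 m$ piece, your estimate $|2b_0x^2-a_0|\le 2D$ loses a factor $2$. The stated constant needs $\|h''\|_\infty\le b_0/a_0^{3/2}$, which uses $\sup_{y\ge0}|2y-1|/(1+y)^{5/2}=1$ with the full $D^{-5/2}$ kept. Your SOAP reduction to the same scalar map in the fixed basis is fine.
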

This is a finite-noise bound, proved in Appendix~\ref{rw:app:beta-mean}. For fixed $\beta_2$, the upper bound grows linearly with $R-1$, motivating an increase in $\beta_2$ as the number of replicas grows. Adapting the scaling rule of \citet{marek2025small} to our setting, we can preserve the second-moment EMA's half-life in terms of local training examples by setting:
\begin{equation}
 \beta_2(R)=\beta_2(R_0)^{R_0/R},\qquad 1-\beta_2(R)=O(R^{-1}).
 \label{rw:eq:beta-rule}
\end{equation}
If $m,v,g,\sigma$ are held fixed and
$\beta_2(R)$ follows \eqref{rw:eq:beta-rule}, with $\beta_2(R)$ bounded away from
zero, the right-hand side remains bounded rather than growing linearly in $R$.
For each $R$, both routes in this comparison use the \emph{same} $\beta_2(R)$.
\textbf{Scaling second-moment memory therefore controls a bound on the Adam
mean mismatch as $R$ grows.} Summing the squared coordinate bounds controls
$\|\bm{b}_R\|_2^2$, and hence the mean term
$(\bm{q}^\top\bm{b}_R)^2$ for every unit $\bm{q}$ in
\eqref{rw:eq:main-decomposition}. The same conclusion holds after SOAP's
orthogonal rotation back to parameter coordinates.
This conditional one-step result motivates the
matched-$\beta_2$ experiments in Section~\ref{sec:beta2-results}.

\paragraph{Scale-estimation error at smaller local batches.}
Beyond the mean mismatch analyzed above, smaller local batches
can also increase the variability of second-moment estimates.
Indeed, at fixed $\beta_2$, smaller local batches produce noisier
gradients, which can increase fluctuations in the estimated
second moment $\widehat v$.
For SOAP, these fluctuations in $\widehat v$ change the coordinatewise normalization and hence the update covariance.
SPlus avoids this source of variability by taking signs in its estimated basis rather than dividing by estimated coordinate scales.

\input{sections/experiments}

\bibliographystyle{preprint}
\bibliography{main}

\numberwithin{equation}{section}
\appendix
\clearpage
\input{appendices/essential_results}

\end{document}

%% file: sections/experiments.tex
\section{Empirical Evaluation}
\label{sec:empirical-evaluation}

We first compare replica scaling across inner optimizers (Sec.~\ref{sec:structured-scaling}), then isolate the effect of second-moment memory on performance (Sec.~\ref{sec:beta2-results}), and finally examine the update distributions behind these results (Sec.~\ref{sec:fixed-covariance-probe}). The analysis connects the observed scaling to the two terms in Equation~\eqref{rw:eq:main-decomposition}: mean mismatch and stochastic aggregation error.

\paragraph{Experimental setup.}
\label{sec:setup}
Our main experiments train 178M- and 512M-parameter LLaMA models on DCLM~\citep{li2024dclm} with sequence length 2048. We evaluate performance on a held-out 100M-token validation split. The 178M runs use a fixed global batch of 524K tokens and approximately 3.55B training tokens; the 512M runs use 2M tokens per global batch and 10.25B-token budget. Unless stated otherwise, replicas take $H=30$ local steps between aggregations and all local optimization methods use a Nesterov SGD outer optimizer. Appendix~\ref{sec:hyperparameters} records the architecture, training budgets, and result-selection protocol.

\subsection{Comparing Inner Optimizers in Replica Scaling}
\label{sec:structured-scaling}

We compare SOAP, SPlus, Muon, AdamW, and Shampoo as inner optimizers for DiLoCo while scaling the number of replcas. Figures~\ref{fig:structured-scaling}
and~\ref{fig:structured-scaling-512m} show both absolute validation loss and
the percentage change from each optimizer's own DP reference \citep{therien2025muloco}.
The latter allows making a direct comparison between inner optimizers as it relates to their impact on worker scaling without being confounded by absolute performance differences of the optimizers (e.g., accounting for the fact that Moun has lower absolute loss than AdamW).

\paragraph{178M Experiment Scale} At 178M scale (Fig~\ref{fig:structured-scaling}), we observe, as in \cite{therien2025muloco}, that Muon has better replica scaling than AdamW, not just DP performance. SOAP has both stronger DP performance and significantly improved replica scaling compared to Muon and Adam. SPlus has the strongest worker scaling, leading to the smallest increase over its own DP reference and the lowest observed loss at large $R$.
At $R=64$, the loss change is \textbf{$+3.1\%$ for SPlus and $+4.8\%$ for
SOAP}, compared with $+8.5\%$ for Muon and $+11\%$ for AdamW.
Shampoo improves on AdamW but remains behind
SOAP and SPlus. As discussed in our work, it does not assure an approximately whitened update unlike SOAP and SPlus. Thus, both the normalization and the coordinates in which it is applied matter empirically. Section~\ref{sec:fixed-covariance-probe} examines how this ordering relates to the covariance of the returned updates. Interestingly, at 178M, $R=8$ SPlus actually improves performance over its DP baseline. This is likely because at $R>1$ it can mimic a majority vote-based technique~\citep{bernstein2018signsgd}.

\begin{figure}[H]
\centering
\includegraphics[width=\linewidth]{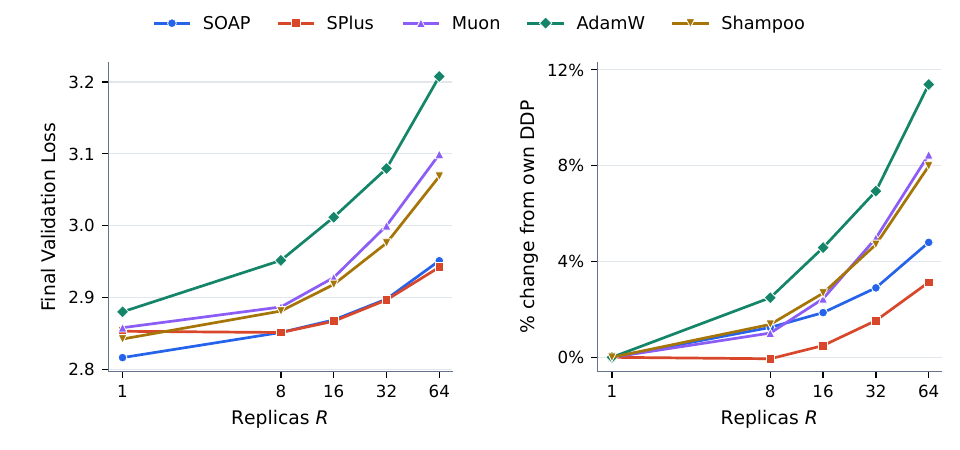}
\caption{\textbf{178M replica scaling.} $R=1$ reports DP performance, while $R>1$ reports the performance of local optimization with synchronization interval $H=30$. Left: final validation loss. Right: change in loss from the same optimizer's own DP (R=1) baseline. We observe that matrix-based optimizers have stronger worker scaling than AdamW and that SPlus achieves the strongest scaling of all relative to its DP baseline. }
\label{fig:structured-scaling}
\end{figure}

\Needspace{9\baselineskip}
\paragraph{512M Experiment Scale}
\label{sec:structured-scaling-512m}
In Figures~\ref{fig:structured-scaling-512m}, we observe a similar pattern. SOAP is, once again, the strongest DP reference and has the lower loss through $R=32$. SPlus overtakes it at $R=64$ and $R=96$.
At $R=96$, SPlus' los change is \textbf{$+2.1\%$ above its own DP loss}, versus $+4.4\%$
for SOAP, $+7.7\%$ for Muon, and $+12\%$ for AdamW.
The crossover is consistent across model scales: SOAP's DP performance is stronger, whereas SPlus degrades less as replicas are scaled and the local batch size shrinks. Both SPlus and SOAP show drastically better scaling than Adam and Muon, making them practical alternatives when training with high worker counts. 

\begin{figure}[H]
\centering
\includegraphics[width=\linewidth]{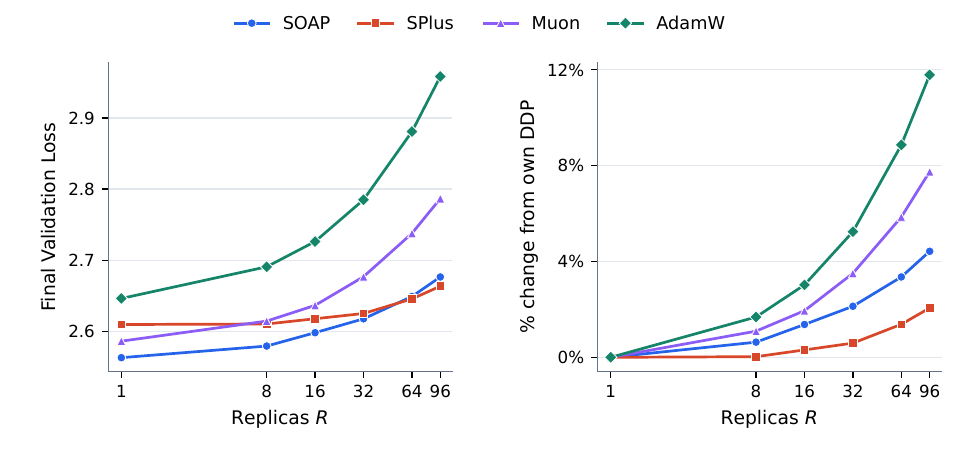}
\caption{\textbf{512M replica scaling.} $R=1$ reports DP performance, while $R>1$ reports the performance of local optimization with synchronization interval $H=30$.
Left: final validation loss. Right: change in loss from the same optimizer's own DP (R=1) baseline. As in Fig.~\ref{fig:structured-scaling}, we observe that matrix-based optimizers have stronger worker scaling than AdamW and that SPlus achieves the strongest scaling of all relative to its DP baseline.}
\label{fig:structured-scaling-512m}
\end{figure}

\begin{table}[H]
\centering
\begin{minipage}[t]{.46\linewidth}
\vspace{0pt}
\caption{\textbf{178M final validation loss.} DP uses $R=1$; other runs use $H=30$. Best in each column is bold.}
\label{tab:structured-scaling}
\centering\small
\setlength{\tabcolsep}{3pt}
\renewcommand{\arraystretch}{1.15}
\begin{tabular*}{\linewidth}{@{\extracolsep{\fill}}lrrrrr@{}}
\toprule
& & \multicolumn{4}{c}{Replicas $R$}\\
\cmidrule(lr){3-6}
Optimizer & DP & 8 & 16 & 32 & 64\\
\midrule
SOAP & \textbf{2.816} & 2.851 & 2.869 & 2.898 & 2.951\\
SPlus & 2.853 & \textbf{2.851} & \textbf{2.867} & \textbf{2.897} & \textbf{2.942}\\
Muon & 2.858 & 2.887 & 2.928 & 3.000 & 3.099\\
AdamW & 2.880 & 2.952 & 3.012 & 3.080 & 3.208\\
Shampoo & 2.842 & 2.881 & 2.918 & 2.976 & 3.069\\
\bottomrule
\end{tabular*}
\end{minipage}\hfill
\begin{minipage}[t]{.51\linewidth}
\vspace{0pt}
\caption{\textbf{512M final validation loss.} DP uses $R=1$; other runs use $H=30$. Best in each column is bold.}
\label{tab:structured-scaling-512m}
\centering\small
\setlength{\tabcolsep}{3pt}
\renewcommand{\arraystretch}{1.15}
\begin{tabular*}{\linewidth}{@{\extracolsep{\fill}}lrrrrrr@{}}
\toprule
& & \multicolumn{5}{c}{Replicas $R$}\\
\cmidrule(lr){3-7}
Optimizer & DP & 8 & 16 & 32 & 64 & 96\\
\midrule
SOAP & \textbf{2.563} & \textbf{2.579} & \textbf{2.598} & \textbf{2.618} & 2.649 & 2.676\\
SPlus & 2.610 & 2.610 & 2.618 & 2.625 & \textbf{2.646} & \textbf{2.663}\\
Muon & 2.586 & 2.615 & 2.637 & 2.677 & 2.738 & 2.787\\
AdamW & 2.646 & 2.691 & 2.726 & 2.785 & 2.881 & 2.958\\
\bottomrule
\end{tabular*}
\end{minipage}
\end{table}

\paragraph{Comparing results across 178M and 512M} Prior work has demonstrated that the worker scaling performance of AdamW and Muon improves as data and model scale are increased~\citep{therien2025muloco,charles2025scaling}. In Figure~\ref{fig:scaling-178m-vs-512m}, we compare the performance of SPlus, SOAP, AdamW, and Muon when scaling replicas. We also observe that scale improves replica scaling across all optimizers can significantly benefit from this including SOAP, SPlus, and Muon.

\paragraph{Isolating the Effect of Local Steps: \textit{Replica-scaling effects are visible at $H=1$, before multi-step
parameter drift can occur!}} Unlike SGD, in adaptive optimizers, the divergence to standard DP training occurs even with a single step of local optimization (due to the non-linear map of gradients to update). Indeed, the arguments regarding covariance structure in Sec.~\ref{rw:sec:theory} apply directly to this case. To isolate the effects of local drift when taking multiple steps from the impact of the adaptive optimizer update, we perform scaling experiments using a single local update. We also remove the impacts of the outer optimizer by using a standard SGD outer optimizer with a learning rate of 1 (equivalent to parameter averaging). Table~\ref{tab:178m-h1-current} reports the results of this experiment. We observe similar results to the multi-step setting: Adam, Muon, and Shampoo have poorer replica scaling, while SOAP and SPlus remain substantially more robust. This is the setting in which Section~\ref{rw:sec:h1} isolates the effect of applying the optimizer before gradient averaging. We also observe that performance (even in terms of scaling with respect to DP) is far more impacted by the choice of optimizer and the number of replicas than the number of local steps. This suggests the impact of the local optimizer on the update and subsequent aggregation can be more significant than the parameter drift often studied in local optimization \citep{stich2019local,karimireddy2020scaffold}.

\begin{table}[H]
\centering\small
\caption{\textbf{178M performance using H=1 (single-step) local updates.} We report the final validation loss for each method. To isolate the effect of the adaptive optimizer's update, we use pure parameter averaging at each step and persistent local optimizer states. All methods train for $6,775$ steps. The DP, $R=8$, and $R=64$ methods are directly tuned, while the $R=16,32$ configurations use interpolated hyperparameters rather than separate full sweeps. The best-performing method in each column is bolded.}
\label{tab:178m-h1-current}
\setlength{\tabcolsep}{5pt}
\begin{tabular}{lrrrrr}
\toprule
Inner optimizer & DP & $R=8$ & $R=16$ & $R=32$ & $R=64$\\
\midrule
SOAP & \textbf{2.816} & 2.842 & 2.864 & 2.901 & 2.958\\
SPlus & 2.853 & \textbf{2.837} & \textbf{2.853} & \textbf{2.873} & \textbf{2.912}\\
Muon & 2.858 & 2.979 & 3.073 & 3.230 & 3.417\\
AdamW & 2.880 & 2.932 & 2.969 & 3.050 & 3.157\\
Shampoo & 2.842 & 2.899 & 2.947 & 3.002 & 3.097\\
\bottomrule
\end{tabular}
\end{table}

Interestingly, Muon is less robust here than at $H=30$ and is worse than AdamW at the larger replica counts. Its strong $H=30$ results therefore do not transfer unchanged to pure one-step averaging. We hypothesise that this may be due to differences in their outer optimizer, which could be more favourable for some methods and less for others. 

\subsection{Second-moment memory and replica scaling}
\label{sec:beta2-results}

Following Sec.~\ref{rw:sec:h1}, we next study a practical consequence of smaller local batches for Adam-based methods. Equation~\eqref{rw:eq:expected-state-gap} shows how local gradient variance enters the second-moment discrepancy.
The token-half-life rule in Equation~\eqref{rw:eq:beta-rule} increases
$\beta_2$ as $R$ grows, preserving the number of tokens represented by
the running average. In this section, we study the impact of applying this and, in particular, evaluate how it impacts conclusions in prior work~\citep{therien2025muloco,horoi2026isoloco}. This rule is already incorporated in the main results in Sec.~\ref{sec:structured-scaling}. Here we evaluate both the Adam-based DiLoCo optimizer and IsoLoCo which is a modification of the outer optimizer. We incorporate it into our analysis as it is based on the Adam inner optimizer, which can benefit from appropriate $\beta_2$ tuning. We note that in the prior section's experiments, to ensure the strongest baseline, this rule is already used by construction in all optimizers with a second moment. This section, on the other hand, emphasizes the improvement compared to existing literature, which does not scale $\beta_2$ when scaling R \cite{charles2025scaling,horoi2026isoloco,sarfi2025sparseloco}

To empirically measure the importance of second moment estimation highlighted in Sec.~\ref{rw:sec:h1}, we compared the performance of optimizers using a fixed $\beta_2$ and tuning it at every replica scale. Figure~\ref{fig:g-beta-178m} reports the final validation loss of DiLoCo, IsoLoCo, and MuLoCo under these strategies. We note that while IsoLoCo and DiLoCo both apply AdamW to each tensor, MuLoCo only applies it to non-hidden tensors. We observe all optimizers improve from tuning $\beta_2$ as replicas increase, but IsoLoCo and DiLoCo benefit more. Under $\beta_2$ tuning, we find that IsoLoCo, despite using an AdamW inner optimizer, can even outperform MuLoCo at large worker counts.

\begin{figure}[H]
\centering
\begin{minipage}[t]{.56\linewidth}
\vspace{0pt}
\centering
\includegraphics[width=\linewidth]{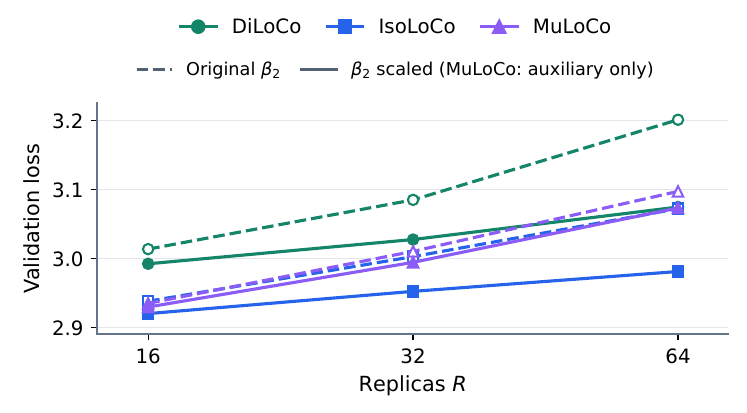}
\caption{\textbf{The effect of tuning AdamW's $\beta_2$ in DiLoCo, IsoLoCo, and MuLoCo. } We report the performance of each method at 178M scale when fixing (dashed lines) and tuning (solid lines) $\beta_2$ at each worker count. While IsoLoCo and DiLoCo both apply AdamW to each tensor, MuLoCo only applies it to non-hidden layers. We observe all optimizers improve from this change, but IsoLoCo and DiLoCo benefit significantly more, revealing that IsoLoCo can outperform MuLoCo at larger worker counts when $\beta_2$ is well tuned.   }
\label{fig:g-beta-178m}
\end{minipage}\hfill
\begin{minipage}[t]{.41\linewidth}
\vspace{0pt}
\centering
\includegraphics[width=\linewidth]{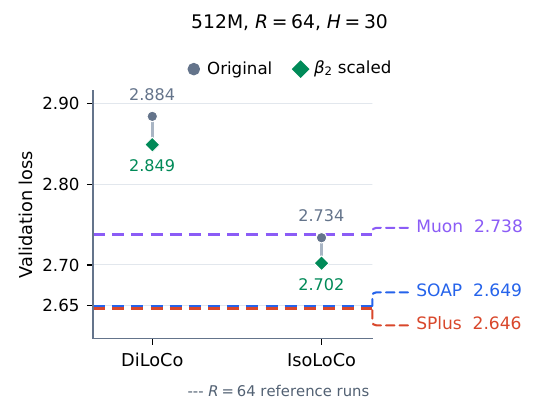}
\caption{\textbf{Tuning $\beta_2$ for larger model and replica.} We ablate using fixed $\beta_2$ vs scaling at high replica count (R=64) and 512M model scale. We observe that performance consistently improves for DiLoCo and IsoLoco. Notably IsoLoco outerperforms Muon inner optimizer (MuLoco). }
\label{fig:g-beta-512m}
\label{tab:512m}
\end{minipage}
\end{figure}

We also confirm that this scaling rule works well at 512M in Figure \ref{fig:g-beta-512m}. Both methods use the published $R=8$ hyperparameters without retuning.
At 512M and $R=64$, changing only $\beta_2$ reduces DiLoCo loss from
$2.884$ to $2.849$ and IsoLoCo loss from $2.734$ to $2.702$,
approximately $1.2\%$ in each case.
\emph{}{Second-moment memory is thus an effective control on replica scaling, including for IsoLoCo, whose outer optimizer does not remove this inner AdamW sensitivity.} Notably, after the relevant control, IsoLoCo, which uses an AdamW inner optimizer, can outperform MuLoCo, while only doing the spectral decomposition during the outer step.

\paragraph{Discussion of Practical Implications} 
Our results suggest that local optimization methods can be deployed across a large number of replicas (50+) while maintaining minimal performance degradation of $<3\%$ when utilizing SOAP or SPlus. Notably, these methods consume more memory than Muon due to the need to maintain $L$ and $R$, however, scalable recipes for SOAP have been studied~\citep{khona2026soapmuon}, demonstrating that it can perform well at very large scale. A particularly relevant setting for our work is cross-data center training where each replica has a considerable number of chips that can be used to shard the additional optimizer states and perform the optimizer step in parallel. At 178M and 512M scales, our results suggest that at a smaller number of replicas SOAP is the preferred inner optimizer, while at a large number of replicas, SPlus performs best. Another observation of our work is that the recently proposed IsoLoCo outer optimizer, which has nearly the same overhead as AdamW inner optimizer-based DiLoCo, can perform very effectively at a large number of replicas simply by adjusting its $\beta_2$ according to the local batch size. When doing so, we show that it outperforms MuLoCo. This can make it an excellent candidate for settings where each replica has limited memory or existing infrastructure and training recipe are already well designed for Adam.

\subsection{Analysis: covariance structure and mean mismatch}
\label{sec:fixed-covariance-probe}

We empirically analyze the covariance structure and mean mismatch discussed in Section 4 for the different optimizer classes. 

\paragraph{Empirically Measuring Whitening}
Using the 178M training setting and beginning with checkpoints at outer round 75, we empirically estimate how close to whitened the covariance of the pseudo-gradient is for different inner optimizers. We perform 64 instantiations of inner training loop starting with the same checkpoint and inner optimizer state, thus only varying the data used. Subsequently we use pairs to construct an estimator for the following quantities:
\begin{equation}
 \widetilde{\bm C}_h:=\frac1{R^2}\sum_{r=1}^R\bm C_{r,h},
 \qquad
 a_h=\left(\frac{d\,\tr(\widetilde{\bm C}_h^2)}
                     {\tr(\widetilde{\bm C}_h)^2}-1\right)^{1/2}.
\label{eq:probe-metrics}
\end{equation}
Here $d$ is the probed dimension; $a_h=0$ means isotropy, and larger
values indicate greater anisotropy, independently of overall noise
magnitude. The constructed $\widetilde{\bm C}_h$ equals the covariance
of the averaged update when cross-worker noise covariances vanish,
conditional on the starting parameters and optimizer states.
Appendix~\ref{app:fixed-covariance-estimator} gives the estimators and exact replay protocol. We chose this particular quantity as a proxy for isotropy as the trace and square trace have efficient estimators available in the statistical literature. The results are shown in Table~\ref{tab:fixed-covariance} below.

\begin{table}[H]
\centering
\begin{minipage}[t]{.35\linewidth}
\vspace{0pt}
\centering\small
\caption{Update-noise anisotropy at $R=64$, round 75, using all 64 replicas and local batch four. Lower is more isotropic.}
\label{tab:fixed-covariance}
\renewcommand{\arraystretch}{1.15}
\begin{tabular*}{\linewidth}{@{\extracolsep{\fill}}lrr@{}}
\toprule
Optimizer & $a_1\;\downarrow$ & $a_{30}\;\downarrow$\\
\midrule
AdamW & 35 & 11\\
Muon & 3.8 & 3.3\\
SOAP & 2.3 & 2.5\\
SPlus & \textbf{0.43} & \textbf{2.4}\\
\bottomrule
\end{tabular*}
\end{minipage}\hfill
\begin{minipage}[t]{.62\linewidth}
\vspace{0pt}
\centering\small
\caption{Mean mismatch is smaller than stochastic error in the detected high-noise directions, and particularly small for SPlus and SOAP compared with AdamW.}
\label{tab:g-mean-noise}
\setlength{\tabcolsep}{3pt}
\renewcommand{\arraystretch}{1.15}
\begin{tabular*}{\linewidth}{@{\extracolsep{\fill}}lrrr@{}}
\toprule
Optimizer & \shortstack{Mean mismatch\\($\times10^{-6}$)}
& \shortstack{Stochastic error\\($\times10^{-6}$)}
& \shortstack{Mean / noise\\($\%$)}\\
\midrule
SPlus & \textbf{.044} & \textbf{.91} & 7.9\\
SOAP & .12 & 1.4 & 7.5\\
Muon & .11 & 1.1 & 14\\
AdamW & 5.4 & 19 & 19\\
\bottomrule
\end{tabular*}
\end{minipage}
\end{table}

We observe that \textit{covariance structure broadly follows the scaling results.} with optimizers that have better replica scaling having better covariance structure. 
The anisotropy differences are already pronounced after one
local step, before parameter drift across replicas can affect subsequent
gradients.  $a_1$ ranges from $35$ for AdamW to $0.43$
for SPlus, with Muon and SOAP between them. After 30 steps, SOAP and
SPlus remain less anisotropic than Muon and AdamW, broadly following
their replica-scaling performance. Interestingly, Muon is significantly more isotropic than Adam, helping to explain its improved scaling \cite{therien2025muloco}.
The next measurements examine the curvature of high-variance noise
directions, connecting this shape comparison to optimization.

\paragraph{Noise dominates mean mismatch along high-noise directions.}
Finally, we compare the two terms of
Equation~\eqref{rw:eq:main-decomposition} along directions selected for
high update variance. For each optimizer, all replicas and its DP
reference start from the same checkpoint weights and replica 0's saved
optimizer state. They run $H=30$ steps on paired data: DP uses the
union of the local batches at each step. We analyze the 63 hidden
attention and MLP weight matrices separately, excluding embeddings,
the output head, and other auxiliary parameters.

For each matrix, 16 repetitions fit a scalar $\alpha_R$ and a leading
noise direction $\widehat{\bm q}$; 24 additional repetitions estimate
the directional mean mismatch $(\widehat{\bm q}^{\top}\bm b_R)^2$ and
stochastic error $\alpha_R^2\widehat{\bm q}^{\top}\bm C_{\rm agg}\widehat{\bm q}$.
Both terms are divided by the squared norm of that matrix's DP mean
update. This per-matrix calibration applies the decomposition to each
matrix separately; it is not a change to the training learning rates.
Appendix~\ref{app:g-directional-estimator} defines the estimators,
including the paired finite-sample correction.

We observe \emph{the mean-mismatch term is only $7.5$--$19\%$ of the stochastic
term on average along these directions.} This supports controlling
the stochastic term in Equation~\eqref{rw:eq:main-decomposition} which is the focus of our analysis in Section 4, and allow to better confirm that the covariance structure of SPlus and SOAP lead to their improved replica scaling.

\paragraph{Extreme noise also lies in high-curvature directions.}
We have seen bounding the maximum noise direction of the covariance as in Equation~\ref{rw:eq:worst-risk} can reduce the maximum discrepancy to the DDP direction, in particular since the above empirical experiments suggest it dominates the bound on the error with respect to the DDP target.   
We next ask whether directions carrying extreme noise variance are directly responsible for poor performance of methods such as AdamW which do not bound them well. We do this by estimating these direction and then the underlying curvature. At each method's own checkpoint for outer step 75, we fit a unit
direction $\widehat{\bm q}$ that maximizes sample variance over 64
paired one-step update differences.  With $\bm{\mathcal H}$ the loss Hessian on 64 held-out
sequences, Table~\ref{tab:g-curvature} reports
\[
 \kappa(\widehat{\bm q})=
 \frac{\widehat{\bm q}^{\top}\bm{\mathcal H}\widehat{\bm q}}
 {\frac1{16}\sum_{j=1}^{16}\bm v_j^{\top}\bm{\mathcal H}\bm v_j},
\]
where the $\bm v_j$ are independent random unit vectors in the same
six-matrix subspace. A ratio of $100$ means $100$ times the average
random-direction \emph{curvature}, not noise variance.

\Needspace{20\baselineskip}
\begin{wraptable}{r}{.47\linewidth}
\vspace{-8pt}
\centering\small
\caption{Loss curvature along a fitted high-noise direction, relative
to random directions. Ranges span replicas 0 and 1, not confidence
intervals; local batch $b=256/R$.}
\label{tab:g-curvature}
\setlength{\tabcolsep}{4pt}
\begin{tabular}{@{}lrr@{}}
\toprule
&\multicolumn{2}{c}{Curvature ratio $\kappa(\widehat{\bm q})$}\\
\cmidrule(lr){2-3}
Optimizer & $R=8$ & $R=64$\\
\midrule
AdamW & $86$--$170$ & $460$--$730$\\
Muon  & $6.4$--$7.7$ & $47$--$100$\\
SOAP  & $3.1$--$3.2$ & $4.2$--$5.2$\\
SPlus  & $\bm{1.5}$--$\bm{1.6}$ & $\bm{1.5}$\\
\bottomrule
\end{tabular}
\vspace{-6pt}
\end{wraptable}

Appendix~\ref{app:g-curvature-estimator} details the estimator used. We observe that Adam and Muon have very high curvature in the directions with the most noise compared to average directions, in particular as we scale the replica size, while SOAP and SPlus are able to effectively redistribute noise in any particular direction due to their whitening. High noise along high curvature directions can be particularly problematic as small errors can be very damaging for the loss in high curvature regions. This gives an additional mechanism for understanding why whitening (which redistributes error more uniformly along all directions) can be particularly important in this setting.

\FloatBarrier
\section{Conclusion}
We have shown that differences to the DP pseudogradient can be seen to result from two primary sources of error: mean mismatch and stochastic error. Building on these insights, we showed that covariance structure in optimizers like AdamW is very important. In our empirical study, we establish the performance of different inner optimizers (SOAP, SPlus, Muon, AdamW), which have more and less appealing properties for reducing stochastic error according to our theory. We find that SPlus reaches the strongest performance at higher worker counts, and the best overall scaling with respect to DP, while SOAP reaches the best DP performance and seems on trend to become the best inner optimizer at scale. We then directly test our theoretical claims by calculating the mean mismatch and stochastic error for each optimizer, showing that these quantities are directly correlated with the observed performance trends.  

\section*{Acknowledgements}
We acknowledge support from NSERC Discovery RGPIN-2021-04104 and compute resources from Digital Research Alliance Canada.

%% file: appendices/essential_results.tex
Appendix~\ref{g:app:main} collects the supporting theoretical results and proofs.
Appendix~\ref{sec:hyperparameters} records the training setup, and
Appendix~\ref{app:g-mechanism-details} gives the analysis protocols and estimators.

\section{Supporting theoretical results}
\label{g:app:main}
\label{rw:sec:appendix}

\input{appendices/optimizer_covariance}

\input{appendices/mean_comparisons}

\input{appendices/basis_theory}

\input{appendices/training_details}

\input{appendices/analysis_protocols}

%% file: appendices/optimizer_covariance.tex
\subsection{Muon covariance under rotational proposals}
\label{app:muon-polar-covariance}

Let $p=\min(m,n)$ and $d_{\max}=\max(m,n)$. If a full-rank proposal has SVD
$\bm{M}=\widetilde{\bm{U}}\operatorname{diag}(\sigma_i)\widetilde{\bm{V}}^\top$, the
ideal Muon update with prescribed matrix-size scale $\gamma>0$ is
\begin{equation}
 \bm{U}_{\rm Muon}=\gamma\,\operatorname{polar}(\bm{M})
 =\gamma\widetilde{\bm{U}}\widetilde{\bm{V}}^\top.
 \label{eq:muon}
\end{equation}
The polar map discards the realized singular values and keeps the polar factor.
In particular,
$\operatorname{polar}(a\bm{M})=\operatorname{polar}(\bm{M})$ for every $a>0$,
so a positive multiplicative change to the entire proposal leaves the update
unchanged.

\begin{proposition}[Muon covariance under rotational proposals]
\label{prop:muon-rotational-covariance}
Suppose $\bm{M}_r$ is full rank almost surely. If $m\geq n$, suppose
$\bm{O}\bm{M}_r\overset{d}{=}\bm{M}_r$ for every $\bm{O}\in\mathrm{O}(m)$;
if $m<n$, suppose $\bm{M}_r\bm{O}\overset{d}{=}\bm{M}_r$ for every
$\bm{O}\in\mathrm{O}(n)$. Either one-sided condition suffices when $m=n$.
Then
\begin{equation}
 \E[\bm{U}_{{\rm Muon},r}]=\bm{0},
 \qquad
 \Cov(\vecop(\bm{U}_{{\rm Muon},r}))
 =\frac{\gamma^2}{d_{\max}}\bm{I}_{mn}.
 \label{eq:muon-rotational-covariance}
\end{equation}
Consequently, independent replicas have a uniformly averaged update covariance
$\gamma^2\bm{I}_{mn}/(d_{\max}R)$, even when their proposal singular-value
distributions differ.
\end{proposition}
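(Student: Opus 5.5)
The plan is to use equivariance of the polar map and treat the case $m\geq n$ first. Since $\bm{M}_r$ has full rank almost surely, $\operatorname{polar}(\bm{M}_r)=\bm{M}_r(\bm{M}_r^\top\bm{M}_r)^{-1/2}$ almost surely. For any $\bm{O}\in\mathrm{O}(m)$ this formula gives $\operatorname{polar}(\bm{O}\bm{M})=\bm{O}\operatorname{polar}(\bm{M})$. Let $\bm{P}=\operatorname{polar}(\bm{M}_r)$. The hypothesis $\bm{O}\bm{M}_r\overset{d}{=}\bm{M}_r$ then yields $\bm{O}\bm{P}\overset{d}{=}\bm{P}$ for every $\bm{O}\in\mathrm{O}(m)$. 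All moments exist because $\|\bm{P}\|_F^2=n$ deterministically.

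For the mean, take $\bm{O}=-\bm{I}$. This gives $\E[\bm{P}]=-\E[\bm{P}]$, hence $\E[\bm{U}_{{\rm Muon},r}]=\bm{0}$.

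For the covariance, compute the entries $\E[P_{ij}P_{kl}]$. First, take $\bm{O}$ to be the diagonal sign flip of a single row $i$. This shows $\E[P_{ij}P_{kl}]=0$ whenever $i\neq k$. When $i=k$, use invariance under permutation matrices: the matrix $\bm{A}=\E[\bm{P}_{i,:}^\top\bm{P}_{i,:}]$ does not depend on the row index $i$. Summing over the $m$ rows gives $m\bm{A}=\E[\bm{P}^\top\bm{P}]=\bm{I}_n$, because $\bm{P}^\top\bm{P}=\bm{I}_n$ when $m\geq n$. Hence $\bm{A}=\bm{I}_n/m$, so $\E[P_{ij}P_{il}]=\delta_{jl}/m$. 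Combining both cases, $\E[P_{ij}P_{kl}]=\delta_{ik}\delta_{jl}/m$, that is, $\Cov(\vecop\bm{P})=\bm{I}_{mn}/m$. Multiplying by $\gamma^2$ gives the claim with $d_{\max}=m$.

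The case $m<n$ follows by transposition. We have $\operatorname{polar}(\bm{M})^\top=\operatorname{polar}(\bm{M}^\top)$, and right invariance of $\bm{M}_r$ becomes left invariance of $\bm{M}_r^\top$. Vectorizing a transpose is a permutation of coordinates, and conjugating $\bm{I}$ by a permutation leaves it unchanged, so the covariance is $\gamma^2\bm{I}/n$. For $m=n$, either one-sided argument works on its own.

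For the aggregate statement, each replica satisfies the covariance identity regardless of its singular-value law. With independence, $\bm{C}_{\rm agg}=R^{-2}\sum_r\bm{C}_r=\gamma^2\bm{I}/(d_{\max}R)$.

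I expect the main obstacle to be justifying the equivariance almost surely. This requires that the polar factor be uniquely defined at full rank, which holds because it is given by the explicit formula above. It also requires that the invariance hypothesis transfers to $\bm{P}$, which holds because $\bm{P}$ is a measurable function of $\bm{M}_r$ on the full-rank set. The rest is bookkeeping.
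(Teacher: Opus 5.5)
Your proof is correct and follows the paper's route: transfer left invariance to the polar factor by equivariance, take $\bm O=-\bm I$ for the mean, use symmetry to make the second moments isotropic, and fix the constant from $\bm P^\top\bm P=\bm I_n$ (handling $m<n$ by transposing is equivalent to the paper's rerun of the argument on rows). The only difference is that the paper gets isotropy by observing that each block $\E[\bm q_j\bm q_k^\top]$ commutes with every element of $\mathrm{O}(m)$ and so equals $a_{jk}\bm I_m$, whereas you use explicit single-row sign flips and row permutations, which also shows that invariance under signed row permutations alone already suffices.
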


\begin{proof}
Consider
$m\geq n$ and write the columns of
$\bm{Q}_r=\operatorname{polar}(\bm{M}_r)$ as
$\bm{q}_{r,1},\ldots,\bm{q}_{r,n}$. Equivariance of the polar map and left
invariance of the proposal law imply
$\bm{O}\bm{Q}_r\overset{d}{=}\bm{Q}_r$ for every
$\bm{O}\in\mathrm{O}(m)$. Taking $\bm{O}=-\bm{I}$ gives
$\E[\bm{Q}_r]=\bm{0}$. Moreover,
$\E[\bm{q}_{r,j}\bm{q}_{r,k}^\top]$ commutes with every orthogonal matrix and
must therefore equal $a_{jk}\bm{I}_m$. Since
$\bm{Q}_r^\top\bm{Q}_r=\bm{I}_n$,
\begin{equation}
 m a_{jk}
 =\E[\bm{q}_{r,k}^\top\bm{q}_{r,j}]
 =\mathbf 1_{\{j=k\}}.
\end{equation}
The column blocks of
$\E[\vecop(\bm{Q}_r)\vecop(\bm{Q}_r)^\top]$ are consequently
$\mathbf 1_{\{j=k\}}\bm{I}_m/m$, which proves
Equation~\eqref{eq:muon-rotational-covariance} because $d_{\max}=m$. The proof for
$m<n$ applies the same argument to the rows, using right invariance and
$\bm{Q}_r\bm{Q}_r^\top=\bm{I}_m$.
\end{proof}

%% file: appendices/mean_comparisons.tex
\subsection{Proof of Proposition~\ref{rw:prop:suds-dp} (SPlus to DDP comparison)}
\label{rw:app:suds-dp}

\begin{proof}[Proof of Proposition~\ref{rw:prop:suds-dp}]
Write $\psi(a)=2\Phi(a)-1$ and $c=\sqrt{2/\pi}$.
For one coordinate, set $X_r=\sigma(a+\sqrt R Z_r)$ and
$\bar{X}=R^{-1}\sum_rX_r$. Then
\[
 \Pr(X_r>0)=\Phi(a/\sqrt R),\quad
 \E[\sign(X_r)]=\psi(a/\sqrt R),\quad
 \E[\sign(\bar{X})]=\psi(a).
\]
In the shared-state model the last expression is exactly the mean of the signed
DDP momentum coordinate, because momentum is affine before the sign.
The expectation of the local average equals the single-replica expectation.
For $a\geq0$,
\begin{equation}
 \sqrt R\psi(a/\sqrt R)-\psi(a)
 =c\int_0^a\left(e^{-u^2/(2R)}-e^{-u^2/2}\right)du.
 \label{rw:eq:sign-integral}
\end{equation}
The integrand is nonnegative and at most
$u^2(1-R^{-1})/2$, by the unit Lipschitz bound for $e^{-x}$ on $x\geq0$.
Integration and odd symmetry give
\[
 \left|\sqrt R\psi(a/\sqrt R)-\psi(a)\right|
 \leq\frac c6(1-R^{-1})|a|^3.
\]
Squaring and summing over coordinates is valid because $\bm{Q}$ is orthogonal.
Each sign has variance $1-\psi(a_j/\sqrt R)^2$. Independence of projected
coordinates and of replicas gives the covariance in
\eqref{rw:eq:suds-dp-bound}. Finally use Proposition~\ref{rw:prop:decomposition}.
\end{proof}

The DDP covariance is
$\bm{Q}\operatorname{Diag}(1-\psi(a_j)^2)\bm{Q}^\top$. For $R\geq1$ the calibrated local
covariance minus this matrix is positive semidefinite, with operator norm at
most $c^2\max_j a_j^2$. Thus mean agreement to first order should not be read
as a variance improvement over signed DDP. At zero signal the covariances agree
exactly. 
\subsection{Adam/SOAP second moments and the mean comparison}
\label{rw:app:state-and-curvature}

Condition on common incoming optimizer states and parameters, as in
Section~\ref{rw:sec:h1}. Let $t$ denote the fresh-gradient step.
Averaging squared gradients is not the same as squaring their average:
the difference is the scatter of the replica gradients. Define
$\bm{S}_R=R^{-1}\sum_r(\widehat{\bm{g}}_r-\bar{\bm{g}})(\widehat{\bm{g}}_r-\bar{\bm{g}})^\top$.
For the second moments in \eqref{rw:eq:adam-map} and \eqref{rw:eq:soap-map},
\begin{equation}
 \begin{split}
 \frac1R\sum_r \bm{v}_{A,r,t}-\bm{v}_{A,\rm DDP,t}
   &=(1-\beta_2)\operatorname{diag}(\bm{S}_R),\\
 \frac1R\sum_r \bm{v}_{S,r,t}-\bm{v}_{S,\rm DDP,t}
   &=(1-\beta_2)\operatorname{diag}(\bm{Q}^\top \bm{S}_R\bm{Q}).
 \end{split}
 \label{rw:eq:state-gaps}
\end{equation}
The second identity assumes a common basis fixed before the fresh gradients;
$\bm v_A,\bm v_S$ are the vectorized buffers.
Thus locally updated second moments already differ from DDP before any
parameter drift.

Under the main-text batch model,
$\Cov(\widehat{\bm g}_r)=R\bm\Sigma_g$ and the replica gradients are
independent, so $\bm\Sigma_g$ is the covariance of the DDP-averaged
gradient. At fixed $B$, taking expectations in \eqref{rw:eq:state-gaps} gives
\begin{equation}
 \E\left[\frac1R\sum_r\bm{v}_{A,r,t}-\bm{v}_{A,\rm DDP,t}\right]
 =(1-\beta_2)(R-1)\operatorname{diag}(\bm{\Sigma}_g).
 \label{rw:eq:expected-state-gap}
\end{equation}
SOAP has the same expected identity in its fixed basis.

Expanding squares around the sample mean gives
\[
 \frac1R\sum_r\widehat{\bm{g}}_r^{\odot2}-\bar{\bm{g}}^{\odot2}
 =\frac1R\sum_r(\widehat{\bm{g}}_r-\bar{\bm{g}})^{\odot2}
 =\operatorname{diag}(\bm{S}_R).
\]
Multiplying by $1-\beta_2$ proves the Adam identity in
\eqref{rw:eq:state-gaps}; applying it to $\bm{Q}^\top\widehat{\bm{g}}_r$ proves the SOAP
identity when $\bm{Q}$ is fixed and common. Under \eqref{rw:eq:batch-model},
$\E[\bm{S}_R]=(R-1)\bm{\Sigma}_g$, which proves \eqref{rw:eq:expected-state-gap}.

For a scalar smooth map $f$ and $X_t=g+\sqrt t\,\sigma Z$, Gaussian variance differentiation
\citep{voigtlaender2021price} gives the exact identity
\begin{equation}
 \frac{d}{dt}\E[f(X_t)]=\frac{\sigma^2}{2}\E[f''(X_t)],\qquad
 \E[f(X_R)]-\E[f(X_1)]
 =\frac{\sigma^2}{2}\int_1^R\E[f''(X_t)]\,dt.
 \label{rw:eq:heat}
\end{equation}
Differentiation under the integral is justified, for example, by bounded first
and second derivatives and Gaussian integrability; the Adam map below satisfies
the required conditions when the incoming second moment is positive.

\subsubsection{Finite-noise mean bound and second-moment memory}
\label{rw:app:beta-mean}

\begin{proof}[Proof of Proposition~\ref{rw:prop:beta-mean}]
For this proof only, abbreviate
$p=\beta_1m$, $k=1-\beta_1$, $a_0=\beta_2v>0$, and $b_0=1-\beta_2>0$.
Write
\[
 f(x)=p\,h(x)+k\,\ell(x),\qquad
 h(x)=(a_0+b_0x^2)^{-1/2},\quad
 \ell(x)=x(a_0+b_0x^2)^{-1/2}.
\]
Direct differentiation gives
\begin{equation}
 h''(x)=\frac{b_0(2b_0x^2-a_0)}{(a_0+b_0x^2)^{5/2}},\qquad
 \ell'''(x)=\frac{-3a_0b_0(a_0-4b_0x^2)}{(a_0+b_0x^2)^{7/2}}.
 \label{rw:eq:derivative-bounds-form}
\end{equation}
For $y\geq0$, the suprema of
$|2y-1|/(1+y)^{5/2}$ and $|1-4y|/(1+y)^{7/2}$ are both one.
The first follows by differentiation on either side of $y=1/2$, and the
second on either side of $y=1/4$; their interior positive extrema have value
less than one. Substituting $y=b_0x^2/a_0$ gives
\[
 \|h''\|_\infty\leq b_0/a_0^{3/2},\qquad
 \|\ell'''\|_\infty\leq3b_0/a_0^{3/2}.
\]
Since $\ell''$ is odd and $Z$ is symmetric,
$\E[\ell''(\sqrt t\sigma Z)]=0$. The mean-value bound therefore implies
\[
 \big|\E[\ell''(g+\sqrt t\sigma Z)]\big|
 \leq 3b_0|g|/a_0^{3/2}.
\]
Thus, uniformly in $t>0$,
\[
 |\E[f''(g+\sqrt t\sigma Z)]|
 \leq\frac{b_0}{a_0^{3/2}}\bigl(|p|+3k|g|\bigr).
\]
Integrating \eqref{rw:eq:heat} over $1\leq t\leq R$ gives
\eqref{rw:eq:beta-mean-bound}. This proof does not replace the random denominator
by its expectation and does not require a small-noise limit.
\end{proof}

\paragraph{Scaling rule}
A local example or token is downweighted by $\beta_2$ per local step. Keeping
its decay per fixed number of local examples unchanged gives
$\log\beta_2(R)/(B/R)=\log\beta_2(R_0)/(B/R_0)$, proving
\eqref{rw:eq:beta-rule}. Write $c_0=-R_0\log\beta_2(R_0)>0$. Then
$1-\beta_2(R)=1-e^{-c_0/R}\leq c_0/R$, so
$(R-1)(1-\beta_2(R))\leq c_0$. This controls the upper bound with fixed
incoming state; it neither forces the bias to zero nor proves that its actual
magnitude decreases monotonically with $\beta_2$.

%% file: appendices/basis_theory.tex
\subsection{Basis and scale errors}
\label{g:app:basis}

SOAP whitening requires both a basis that reduces correlations and coordinate
scales that normalize the remaining variances. We first separate these two
sources of error in the covariance of the returned update, then show how they
affect covariance matching and the stochastic term in
Equation~\eqref{rw:eq:main-decomposition}.

\paragraph{Conditioning and scope.}
\textbf{The covariance calculations below concern centered proposals with
bases and denominators fixed before the proposal is drawn.} They describe an
idealized one-step optimizer core, omitting a common learning-rate factor and
the numerical stabilizer. The ideal denominators estimate the centered
variance of this proposal. In practical SOAP, the denominator instead uses
gradient second moments, whereas the numerator contains momentum; refreshing
either state from the same gradient also couples the map to its input. The
identities below therefore describe the frozen-state model, not the full
fresh-gradient Adam or SOAP step.

\subsubsection{Scale errors and correlations left in the estimated basis}
\label{g:app:scale-basis}

Let worker $r$ have a centered matrix proposal
$\bm M_r\in\mathbb R^{m\times n}$, and set
$\bm x_r=\vecop(\bm M_r)$ and
$\bm\Sigma_r=\Cov(\bm x_r)\succ\bm 0$. Let
$\widehat{\bm Q}_{L,r}$ and $\widehat{\bm Q}_{R,r}$ be the fixed estimated
left and right bases. They are orthogonal matrices. Rotating into these
bases gives the matrix proposal on which normalization or sign acts:
\begin{equation}
 \bm Z_r=\widehat{\bm Q}_{L,r}^{\top}\bm M_r
                  \widehat{\bm Q}_{R,r},\qquad
 \bm Q_r=\widehat{\bm Q}_{R,r}\otimes\widehat{\bm Q}_{L,r},\qquad
 \vecop(\bm Z_r)=\bm Q_r^{\top}\bm x_r.
 \label{g:eq:matrix-rotation}
\end{equation}
No Kronecker assumption is imposed on $\bm\Sigma_r$ here: the Kronecker
matrix $\bm Q_r$ represents the two-sided rotation, not a factorization of
the full covariance.

To distinguish scale error from an imperfect basis, let $\bm\Gamma_r$ be the
true covariance in the estimated basis, $\bm D_r$ its diagonal of coordinate
variances, and $\bm{\mathcal C}_r$ the correlation matrix remaining after
exact variance normalization:
\begin{equation}
 \bm\Gamma_r=\bm Q_r^{\top}\bm\Sigma_r\bm Q_r,\qquad
 \bm D_r=\operatorname{Diag}(\bm v_r),\quad
 v_{rj}=(\bm\Gamma_r)_{jj},\qquad
 \bm{\mathcal C}_r=\bm D_r^{-1/2}\bm\Gamma_r\bm D_r^{-1/2}.
 \label{g:eq:rotated-correlation}
\end{equation}
Thus $\bm{\mathcal C}_r$ has unit diagonal; its off-diagonal entries measure
correlations that the basis has not removed. The actual normalization uses
frozen variance estimates $\widehat v_{rj}$. Write their relative errors as
$e_{rj}$ and collect the resulting inverse-scale gains in $\bm A_r$:
\begin{equation}
 \widehat v_{rj}=v_{rj}(1+e_{rj})>0,\qquad
 \widehat{\bm D}_r=\operatorname{Diag}(\widehat{\bm v}_r),\qquad
 \bm A_r=\operatorname{Diag}\big((1+e_{rj})^{-1/2}\big).
 \label{g:eq:relative-scale-error}
\end{equation}
If $\widehat{\bm V}_r$ is the $m\times n$ reshape of $\widehat{\bm v}_r$,
the SOAP core divides by these estimated scales, whereas SPlus takes signs.
Both then rotate the result back to the parameter coordinates:
\begin{align}
 \bm U_r^{\rm SOAP}
 &=\widehat{\bm Q}_{L,r}
       \left(\bm Z_r\oslash\sqrt{\widehat{\bm V}_r}\right)
       \widehat{\bm Q}_{R,r}^{\top},\nonumber\\
 \vecop(\bm U_r^{\rm SOAP})
 &=\bm Q_r\widehat{\bm D}_r^{-1/2}\bm Q_r^{\top}\bm x_r,
 \label{g:eq:soap-matrix-map}\\
 \bm U_r^{\rm SPlus}
 &=\widehat{\bm Q}_{L,r}\sign(\bm Z_r)
       \widehat{\bm Q}_{R,r}^{\top},\nonumber\\
 \vecop(\bm U_r^{\rm SPlus})
 &=\bm Q_r\sign(\bm Q_r^{\top}\bm x_r).
 \label{g:eq:suds-matrix-map}
\end{align}
Division, square roots, and signs in these matrix expressions act entrywise.

\begin{proposition}[Separate effects of the basis and coordinate scales]
\label{g:prop:scale-basis}
Under the preceding frozen-state assumptions, define
$\bm C_r^{\rm SOAP}=\Cov(\vecop(\bm U_r^{\rm SOAP}))$.
For every centered proposal with finite covariance,
\begin{equation}
 \bm C_r^{\rm SOAP}
 =\bm Q_r\bm A_r\bm{\mathcal C}_r\bm A_r\bm Q_r^{\top},
 \label{g:eq:soap-covariance}
\end{equation}
and the whitening error decomposes exactly as
\begin{equation}
 \|\bm C_r^{\rm SOAP}-\bm I\|_F^2
 =\underbrace{\sum_j\left(\frac{e_{rj}}{1+e_{rj}}\right)^2}
                  _{\text{coordinate-scale error}}
  +\underbrace{\sum_{i\ne j}
       \frac{\mathcal C_{r,ij}^2}{(1+e_{ri})(1+e_{rj})}}
                  _{\text{residual correlations after normalization}}.
 \label{g:eq:soap-whitening-error}
\end{equation}
If additionally $\bm x_r$ is centered Gaussian, then
\begin{align}
 \bm C_r^{\rm SPlus}
 &=\bm Q_r\left[\frac2\pi\arcsin(\bm{\mathcal C}_r)\right]
                   \bm Q_r^{\top},
 \label{g:eq:suds-covariance}\\
 \|\bm C_r^{\rm SPlus}-\bm I\|_F^2
 &=\sum_{i\ne j}
       \left[\frac2\pi\arcsin(\mathcal C_{r,ij})\right]^2
 \leq\|\bm{\mathcal C}_r-\bm I\|_F^2.
 \label{g:eq:suds-whitening-error}
\end{align}
Here $\arcsin$ acts entrywise. The same sign identities hold for the
positive scale-mixture model
$\bm x_r=\sqrt{\tau_r}\bm y_r$, where
$\bm y_r\sim\mathcal N(\bm0,\bm\Sigma_r)$,
$\tau_r>0$ is independent of $\bm y_r$, and $\E[\tau_r]=1$.
\end{proposition}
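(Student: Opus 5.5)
The plan is to work entirely in the estimated basis and use the orthogonality of $\bm Q_r$ to transfer everything back. Since $\bm Q_r$ is orthogonal and fixed before the proposal is drawn, $\Cov(\bm Q_r\bm y)=\bm Q_r\Cov(\bm y)\bm Q_r^\top$. The Frobenius distance to the identity is also invariant: $\|\bm Q_r\bm K\bm Q_r^\top-\bm I\|_F=\|\bm K-\bm I\|_F$. It therefore suffices to compute the rotated-basis covariance $\bm K$ for each method and then measure its distance to $\bm I$ entrywise.

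For SOAP, equation \eqref{g:eq:soap-matrix-map} gives $\vecop(\bm U_r^{\rm SOAP})=\bm Q_r\widehat{\bm D}_r^{-1/2}\bm Q_r^\top\bm x_r$. This is a fixed linear map applied to a centered vector. Its covariance is therefore $\bm Q_r\widehat{\bm D}_r^{-1/2}\bm\Gamma_r\widehat{\bm D}_r^{-1/2}\bm Q_r^\top$, using $\bm\Gamma_r=\bm Q_r^\top\bm\Sigma_r\bm Q_r$. Now substitute $\widehat{\bm D}_r^{-1/2}=\bm A_r\bm D_r^{-1/2}$, which follows from \eqref{g:eq:relative-scale-error} because diagonal matrices commute. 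Together with $\bm{\mathcal C}_r=\bm D_r^{-1/2}\bm\Gamma_r\bm D_r^{-1/2}$, this yields \eqref{g:eq:soap-covariance}.

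The matrix $\bm A_r\bm{\mathcal C}_r\bm A_r$ has entries $\mathcal C_{r,ij}(1+e_{ri})^{-1/2}(1+e_{rj})^{-1/2}$.
\begin{itemize}
\item On the diagonal, $\mathcal C_{r,jj}=1$, so the entry is $1/(1+e_{rj})$. Subtracting one gives $-e_{rj}/(1+e_{rj})$.
\item Off the diagonal, the squared entry is $\mathcal C_{r,ij}^2/((1+e_{ri})(1+e_{rj}))$.
\end{itemize}
Summing the squared entries gives \eqref{g:eq:soap-whitening-error} exactly.

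For SPlus, \eqref{g:eq:suds-matrix-map} gives $\vecop(\bm U_r^{\rm SPlus})=\bm Q_r\sign(\bm z)$ with $\bm z=\bm Q_r^\top\bm x_r\sim\mathcal N(\bm0,\bm\Gamma_r)$. The sign vector has mean zero by the symmetry $\bm z\mapsto-\bm z$. Each entry has unit second moment, since the Gaussian coordinates are almost surely nonzero because $\bm\Gamma_r\succ0$.

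The off-diagonal second moments are the one substantive ingredient. Signs are invariant under positive rescaling of each coordinate, so $\E[\sign z_i\sign z_j]$ depends only on the bivariate Gaussian with correlation $\rho=\mathcal C_{r,ij}$. Grothendieck's identity (Sheppard's formula, the arcsine law) gives $\Pr(z_iz_j>0)=\tfrac12+\tfrac1\pi\arcsin\rho$. This is the main step, and I would cite it as in the main text \citep{dirksen2022onebit} or sketch the rotation argument: the probability that two half-spaces with angle $\arccos\rho$ disagree is $\arccos\rho/\pi$. Hence $\E[\sign z_i\sign z_j]=\tfrac2\pi\arcsin\rho$, which proves \eqref{g:eq:suds-covariance}, with unit diagonal since $\arcsin 1=\pi/2$.

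The Frobenius identity then follows because the diagonal of the rotated-basis covariance is exactly one. The inequality reduces to $|\tfrac2\pi\arcsin\rho|\le|\rho|$ on $[-1,1]$. This holds because $\arcsin$ is odd and convex on $[0,1]$, with $\arcsin 0=0$ and $\arcsin 1=\pi/2$, so $\arcsin\rho\le\tfrac\pi2\rho$ for $\rho\in[0,1]$.

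For the scale-mixture extension, observe that $\sign(\bm Q_r^\top\sqrt{\tau_r}\bm y_r)=\sign(\bm Q_r^\top\bm y_r)$ because $\tau_r>0$. The SPlus update is therefore pointwise identical to the Gaussian case, and both SPlus identities transfer verbatim. The condition $\E[\tau_r]=1$ does not change the sign computation. It only ensures that $\Cov(\bm x_r)=\bm\Sigma_r$, so that $\bm{\mathcal C}_r$ is the same correlation matrix.
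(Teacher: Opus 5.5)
Your proposal is correct and follows essentially the same route as the paper: for SOAP, the linear covariance sandwich with $\widehat{\bm D}_r^{-1/2}=\bm A_r\bm D_r^{-1/2}$ and an entrywise Frobenius computation in the rotated frame; for SPlus and the scale mixture, the Gaussian arcsine identity, the chord bound from convexity of $\arcsin$, and pointwise sign invariance under multiplication by $\sqrt{\tau_r}>0$. The only difference is cosmetic: you phrase the arcsine step through the agreement probability $\Pr(z_iz_j>0)$, whereas the paper uses the disagreement probability $\arccos(\rho)/\pi$, and the two are equivalent.
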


\begin{proof}
Vectorization gives Equations~\eqref{g:eq:soap-matrix-map} and
\eqref{g:eq:suds-matrix-map}. Since the maps and denominators are frozen,
the SOAP covariance is the linear covariance sandwich. Substitute
$\bm\Gamma_r=\bm D_r^{1/2}\bm{\mathcal C}_r\bm D_r^{1/2}$ and
$\widehat{\bm D}_r^{-1/2}\bm D_r^{1/2}=\bm A_r$ to obtain
Equation~\eqref{g:eq:soap-covariance}. Orthogonal conjugation preserves
Frobenius norm. In the rotated frame, the diagonal covariance entries are
$(1+e_{rj})^{-1}$ and the off-diagonal entries are
$\mathcal C_{r,ij}/\sqrt{(1+e_{ri})(1+e_{rj})}$.
Summing their squared differences from the identity proves
Equation~\eqref{g:eq:soap-whitening-error}.

For the sign map, each centered nondegenerate Gaussian coordinate has an
equiprobable sign and hence mean zero and variance one. Two standardized
Gaussian coordinates with correlation $\rho$ have sign disagreement
probability $\arccos(\rho)/\pi$: represent them as projections of an
isotropic two-dimensional Gaussian onto two unit directions with angle
$\arccos(\rho)$ and use its uniform angular distribution. Their sign
product therefore has expectation
$1-2\arccos(\rho)/\pi=(2/\pi)\arcsin(\rho)$. This is the Gaussian
arcsine identity used in one-bit covariance estimation
\citep{dirksen2022onebit}, and proves Equation~\eqref{g:eq:suds-covariance}.
For $0\leq t\leq1$, convexity of $\arcsin(t)$ places it below the chord
joining its endpoint values, so $(2/\pi)\arcsin(t)\leq t$.
Oddness extends the absolute-value inequality to $[-1,1]$. Squaring the
off-diagonal entries proves Equation~\eqref{g:eq:suds-whitening-error}.
Finally, multiplication by the positive scalar $\sqrt{\tau_r}$ leaves
every sign unchanged; independence and $\E[\tau_r]=1$ preserve the stated
input covariance.
\end{proof}

\textbf{An accurate basis does not remove SOAP's scale-estimation error.}
The first term remains when the basis decorrelates the proposal but the
local scales are inaccurate; the second remains when exact scales are used
in a basis that leaves correlations. SPlus removes the estimated-scale term
in this centered model, while retaining dependence on those correlations.
In particular, if
$\bm{\mathcal C}_r=\bm I$, SPlus has identity covariance for every worker,
including workers using different bases and different input variances.

For an explicit bound, if $\max_j|e_{rj}|\leq\epsilon_r<1$, then
\begin{equation}
 \|\bm C_r^{\rm SOAP}-\bm I\|_F^2
 \leq\frac{\|\bm e_r\|_2^2+
                  \|\bm{\mathcal C}_r-\bm I\|_F^2}
                {(1-\epsilon_r)^2}.
 \label{g:eq:relative-error-bound}
\end{equation}
This follows by bounding each denominator in
Equation~\eqref{g:eq:soap-whitening-error} from below. Underestimating a
variance can therefore amplify the whitening error through its reciprocal.
Rotation back, by contrast, preserves the error in either operator or
Frobenius norm:
\begin{equation}
 \|\bm Q_r\bm B\bm Q_r^{\top}-\bm I\|
 =\|\bm B-\bm I\|.
 \label{g:eq:rotation-invariance}
\end{equation}
Orthogonal multiplication preserves singular values, proving
Equation~\eqref{g:eq:rotation-invariance}. Rotation back can change the
directions of residual anisotropy, but cannot increase this norm of its error.

%% file: appendices/training_details.tex
\section{Training setup and hyperparameters}
\label{sec:hyperparameters}

\paragraph{Models and data.}
The experiments use decoder-only LLaMA models trained on
DCLM~\citep{li2024dclm}, with sequence length 2048 and evaluation on a
held-out 100M-token validation split. The model configuration files give
the architecture below. Global batch size is held fixed within each model
scale, so the local batch is the global batch divided by $R$.

\begin{table}[H]
\centering\small
\begin{tabular}{lrr}
\toprule
Setting & 178M & 512M\\
\midrule
Transformer layers & 9 & 12\\
Hidden width & 1024 & 1536\\
Attention heads / key-value heads & $8/8$ & $12/12$\\
Feed-forward width & 2688 & 5440\\
Activation & SwiGLU & SwiGLU\\
Sequence length & 2048 & 2048\\
Global tokens per inner step & 524,288 & 2,097,152\\
Global sequences per inner step & 256 & 1024\\
Training tokens (approximately) & 3.55B & 10.25B\\
\bottomrule
\end{tabular}
\caption{Architecture and training scales. The global batch is measured
per inner step, not per communication round.}
\label{tab:training-setup}
\end{table}

\paragraph{Aggregation and comparisons.}
Unless stated otherwise, replicas take $H=30$ local steps and use a
Nesterov SGD outer optimizer. The $H=1$ controls use uniform averaging
without outer momentum. Percentage changes in validation loss use each
optimizer's own DP reference and are computed before rounding.
The covariance, mean/noise, and curvature probes use the 178M
outer-step-75 checkpoints; their replay settings and sample counts are
specified in Appendix~\ref{app:g-mechanism-details}.

\paragraph{Hyperparameter sweep.} We tuned the learning rate for each optimizer in the DP setting and the inner and outer learning rates and outer momentum for local training. For each optimizer we tuned R=8,64,96 each with at least 10 grid points extrapolating the HP for R=16 and 32.

%% file: appendices/analysis_protocols.tex
\section{Analysis protocols and estimators}
\label{app:g-mechanism-details}

All three analyses use the 178M outer-step-75 checkpoints. The covariance
and curvature probes use six matrices (attention query and MLP down-projection
at layers 0, 4, and 8); the DP-referenced mean/noise comparison covers all
63 hidden attention and MLP matrices. Each optimizer uses its own trained
checkpoint. The protocols below distinguish native saved states from the
common-state replay used for comparison with DP.

\subsection{Conditional covariance replay}
\label{app:fixed-covariance-estimator}

This appendix spells out the covariance measurement used in
Table~\ref{tab:fixed-covariance}. The
probe estimates the minibatch-induced distribution of an optimizer's local
update conditional on one trained checkpoint. It does not treat successive
updates from the training trajectory as independent covariance samples.

\subsubsection{Conditional replay protocol}

At a communication boundary, all replicas share the synchronized parameters
$\bm{W}$, while replica $r$ has its own saved optimizer state $s_r$. Table~\ref{tab:fixed-covariance} uses all 64 saved replica states. For each replica,
we run $2n$ rollouts: $n=16$. Before every rollout, we restore the same pair
$(\bm{W},s_r)$, then consume a new segment of the replica's shuffled data stream.
Within a rollout, the parameters and optimizer state evolve normally. We record
the parameter displacement after $h\in\{1,30\}$ local steps,
\begin{equation}
  \bm{u}_{r,j}^{(h)}
  =\mathop{\operatorname{concat}}_{\ell\in\mathcal L}
    \operatorname{vec}\!\left(\bm{W}_\ell-\bm{W}_{r,j,\ell}^{(h)}\right),
  \qquad j=1,\ldots,2n,
  \label{eq:probe-raw-update}
\end{equation}
where $\mathcal L$ contains attention \texttt{q\_proj} and MLP
\texttt{down\_proj} at layers 0, 4, and 8. For the 178M model, this concatenates
$d=11{,}403{,}264$ coordinates. The $h=1$ and $h=30$ endpoints come from the
same rollout and therefore have coupled finite-sample errors.

The per-replica batch is four sequences for $R=64$,
giving a global batch of 256 sequences. Each replay uses the optimizer
configuration saved by the corresponding training run, including its replica-local
momentum and second moments and its saved replica-specific or shared basis.

For fixed $r$ and $h$, write
\begin{equation}
  \bm{\mu}_{r,h}=\E[\bm{u}_{r,j}^{(h)}],
  \qquad \bm{C}_{r,h}=\Cov(\bm{u}_{r,j}^{(h)}).
\end{equation}
Two consecutive rollouts form one paired difference,
\begin{equation}
  \bm{z}_{r,i}^{(h)}
  =\frac{\bm{u}_{r,2i-1}^{(h)}-\bm{u}_{r,2i}^{(h)}}{\sqrt 2},
  \qquad i=1,\ldots,n.
  \label{eq:probe-paired-difference}
\end{equation}
If the two rollouts are independent draws from the same conditional
distribution, then
\begin{equation}
  \E[\bm{z}_{r,i}^{(h)}]=\bm{0},
  \qquad
  \Cov(\bm{z}_{r,i}^{(h)})
  =\tfrac12(\bm{C}_{r,h}+\bm{C}_{r,h})=\bm{C}_{r,h}.
  \label{eq:probe-pair-covariance}
\end{equation}
The subtraction therefore removes the unknown mean, while $1/\sqrt2$ restores
the original covariance scale.

\subsubsection{Trace estimators from full-coordinate inner products}

Fix a horizon and suppress $h$ in the notation. Define the Gram entries
\begin{equation}
  G_{ri,sj}=\bm{z}_{r,i}^{\top}\bm{z}_{s,j}.
\end{equation}
The code computes these inner products over every selected coordinate; it does
not use a random projection. It nevertheless estimates population moments
from $n$ paired differences per replica.

For each replica, define
\begin{align}
  t_r
  &=\frac1n\sum_{i=1}^n G_{ri,ri},
  \label{eq:probe-trace-estimator}\\
  b_r
  &=\frac{1}{n(n-1)}\sum_{i\neq j}
    G_{ri,ri}G_{rj,rj},
  \label{eq:probe-trace-square-estimator}\\
  q_{rr}
  &=\frac{1}{n(n-1)}\sum_{i\neq j}G_{ri,rj}^2.
  \label{eq:probe-c2-estimator}
\end{align}
Under the IID-rollout idealization, for independent paired differences,
\begin{equation}
  \E[t_r]=\tr(\bm{C}_r),\qquad
  \E[b_r]=\tr(\bm{C}_r)^2,\qquad
  \E[q_{rr}]=\tr(\bm{C}_r^2).
\end{equation}
The last identity follows from
\begin{equation}
  \E[(\bm{z}^\top \bm{z}')^2]
  =\tr\!\left(\E[\bm{z}\bm{z}^\top]
    \E[\bm{z}'\bm{z}'^\top]\right)
  =\tr(\bm{C}^2)
\end{equation}
for independent $\bm{z}$ and $\bm{z}'$. Using $i\neq j$ is important: including the
diagonal would add fourth-moment self-products and bias the estimate upward.

For different replicas $r\neq s$, the cross term
\begin{equation}
  q_{rs}=\frac1{n^2}\sum_{i,j}G_{ri,sj}^2
  \label{eq:probe-cross-c2-estimator}
\end{equation}
estimates $\tr(\bm{C}_r\bm{C}_s)$ under the same IID and conditional
replica-independence idealization. The quantity is the scale-dependent
Frobenius overlap between two replicas' covariance matrices, not a
cross-covariance between their stochastic updates.

\subsubsection{Scaled sum of local covariances}

The probe constructs the scaled sum
\begin{equation}
  \widetilde{\bm{C}}=\frac1{R^2}\sum_{r=1}^R \bm{C}_r.
  \label{eq:probe-average-covariance}
\end{equation}
For the actual replica average $\bar{\bm u}=R^{-1}\sum_r\bm u_r$,
this equals $\Cov(\bar{\bm u})$ if the cross-replica noise covariances
vanish conditional on the starting parameters and optimizer states.
The local-replay probe does not measure those cross-covariances.
To estimate the trace moments of $\widetilde{\bm C}$, it uses
\begin{align}
  \widehat T
  &=\frac1{R^2}\sum_r t_r,
  &\E[\widehat T]&=\tr(\widetilde{\bm{C}}),
  \label{eq:probe-average-trace}\\
  \widehat S
  &=\frac1{R^4}\sum_{r,s}q_{rs},
  &\E[\widehat S]&=\tr(\widetilde{\bm{C}}^2),
  \label{eq:probe-average-c2}\\
  \widehat B
  &=\frac1{R^4}\left(\sum_r b_r+
    \sum_{r\neq s}t_rt_s\right),
  &\E[\widehat B]&=\tr(\widetilde{\bm{C}})^2.
  \label{eq:probe-average-trace-square}
\end{align}
The separate estimator $\widehat B$ avoids squaring the noisy estimate
$\widehat T$. The reported anisotropy is
\begin{equation}
  \widehat a
  =\sqrt{\max\!\left\{0,
    \frac{d\widehat S}{\widehat B}-1\right\}}.
  \label{eq:probe-reported-covariance-metrics}
\end{equation}
Under the IID idealization, $\widehat T$, $\widehat S$, and $\widehat B$ are
built from unbiased moment estimators, but their ratio and square root are not
themselves unbiased. The clipping only handles a negative raw value caused by
finite-sample error; it can therefore return zero even when the population
covariance is anisotropic.

If $\lambda_1,\ldots,\lambda_d$ are the eigenvalues of $\widetilde{\bm{C}}$, the
population quantity is
\begin{equation}
  a^2=\frac{d\sum_i\lambda_i^2}{(\sum_i\lambda_i)^2}-1,
  \qquad
  \frac{r_{\mathrm{eff}}}{d}=\frac{1}{1+a^2},
  \label{eq:probe-effective-rank}
\end{equation}
where $r_{\mathrm{eff}}=(\sum_i\lambda_i)^2/\sum_i\lambda_i^2$ is the
participation-ratio effective rank. Thus $a$ measures covariance shape
independently of scale; it is undefined when the covariance is identically
zero. For example,
$\operatorname{diag}(5,5)$ has $a=0$, whereas
$\operatorname{diag}(9,1)$ has $a=0.8$. Multiplying either matrix
by a positive scalar leaves $a$ unchanged.

\subsubsection{Scope and interpretation}

The estimator is conditional on one checkpoint and one set of replica-local
states. Each method is evaluated on its own trained trajectory, so a
cross-method difference can reflect both the update rule and the state reached
by training; it is not a same-state causal ablation. Concatenating six matrices
also mixes within-matrix anisotropy, scale imbalance between matrices, and
cross-matrix covariance. The input streams are disjoint segments of one
shuffled replica-local permutation, rather than independent samples drawn with
replacement. The unbiasedness equalities above are therefore exact only under
the stated IID idealization. Moreover, the $n(n-1)$ and $n^2$ sums reuse the
same $n$ vectors; their terms are not independent observations. Finally,
interpreting Equation~\eqref{eq:probe-average-covariance} as the covariance
of the actual average requires vanishing conditional cross-replica
noise covariances.

Lower $a$ indicates a more isotropic $\widetilde{\bm C}$,
independently of total noise magnitude.
For SOAP, the saved position relative to a basis-refresh step can also
affect the measured covariance: the fresh gradient may enter both the basis
and the proposal. At this single checkpoint, the probe characterizes update
geometry but does not identify its causal contribution to final loss.

\subsection{DP-referenced directional mean and noise}
\label{app:g-directional-estimator}

Each optimizer is evaluated at its own round-75 checkpoint. At the
start of every repetition, the checkpoint weights and replica 0's
saved optimizer state are copied to every replica and to the DP
reference. ``Common state'' means identical states within that replay,
not an averaged optimizer state or a state shared across optimizers.
Each replica consumes a persistent, shuffled data stream without
reusing sequences across repetitions. In each step DP sees the union
of the same local batches, evaluated at its own iterate. This preserves
the pairing between local and DP data without reusing examples between
the direction-fitting and evaluation samples.

Let $j$ index one of the 63 hidden matrices, vectorized in parameter
coordinates. Write $\bm x_{j,k}$ for its averaged local displacement
and $\bm y_{j,k}$ for its DP displacement in repetition $k$.
On the 16 fitting repetitions at $R=8$, define
\[
 \widehat\alpha_j
 =\frac{\langle\overline{\bm x}_{j,\rm fit},
                         \overline{\bm y}_{j,\rm fit}\rangle}
        {\norm{\overline{\bm x}_{j,\rm fit}}^2},
\]
and take $\widehat{\bm q}_j$ to be the leading eigenvector of the
sample covariance of $\widehat\alpha_j\bm x_{j,k}$ on those same
fitting repetitions. The reference mean is
$\bm d_j=\E[\bm y_{j,k}]$, and the mean discrepancy is
$\bm b_j=\widehat\alpha_j\E[\bm x_{j,k}]-\bm d_j$, conditional on
the fitting sample. The decomposition in
Equation~\eqref{rw:eq:main-decomposition} therefore applies to each
matrix with these fitted quantities held fixed.

We use the remaining $K=24$ independent repetitions for evaluation.
Set $\bm z_{j,k}=\widehat\alpha_j\bm x_{j,k}-\bm y_{j,k}$,
$t_{j,k}=\widehat{\bm q}_j^\top\bm z_{j,k}$, and
$p_{j,k}=\widehat\alpha_j\widehat{\bm q}_j^\top\bm x_{j,k}$.
With $s^2$ denoting the unbiased sample variance on these repetitions,
the unnormalized estimates are
\[
 \widehat B_j=\overline t_j^{\,2}-\frac{s^2(t_{j,k})}{K},
 \qquad
 \widehat N_j=s^2(p_{j,k}).
\]
Conditional on the fit and independent evaluation repetitions,
$\widehat B_j$ is unbiased for
$(\widehat{\bm q}_j^\top\bm b_j)^2$ and $\widehat N_j$ estimates
$\widehat\alpha_j^2\widehat{\bm q}_j^\top\bm C_j\widehat{\bm q}_j$,
where $\bm C_j=\Cov(\bm x_{j,k})$.
Crucially, the correction uses the variance of the \emph{paired}
difference $\bm z_{j,k}$; it retains the local--DP cross-covariance.
The stochastic term is the variance of a single aggregated update,
not that variance divided by the number of evaluation repetitions.

For each matrix, the first two numerical columns in Table~\ref{tab:g-mean-noise}
are $\widehat B_j/\norm{\widehat{\bm d}_j}^2$ and
$\widehat N_j/\norm{\widehat{\bm d}_j}^2$, with
$\widehat{\bm d}_j=K^{-1}\sum_k\bm y_{j,k}$, expressed in units of
$10^{-6}$. The third column is $100\widehat B_j/\widehat N_j$.
Every column is averaged with equal weight over the 63 matrices.
In particular, the reported average ratio is not the ratio of the
two reported averages. Bias-corrected mismatch estimates can be
negative and are not clipped before averaging. The estimated DP
normalizer and the ratios introduce additional finite-sample
uncertainty; the normalized entries are not claimed to be unbiased.
The matrices share data and are not independent experimental repeats.

\paragraph{Trace-based comparison.}
The corresponding whole-matrix estimates are
\[
 \widehat B_j^{\mathrm{tr}}
 =\norm{\overline{\bm z}_j}^{2}
 -\frac{1}{K(K-1)}\sum_k
       \norm{\bm z_{j,k}-\overline{\bm z}_j}^{2},
 \qquad
 \widehat N_j^{\mathrm{tr}}
 =\frac{\widehat\alpha_j^2}{K-1}\sum_k
       \norm{\bm x_{j,k}-\overline{\bm x}_j}^{2}.
\]
They estimate $\norm{\bm b_j}^{2}$ and
$\widehat\alpha_j^2\tr(\bm C_j)$, respectively.
The same DP-mean normalizer and equal per-matrix averaging apply;
the mean/noise ratio is computed per matrix before averaging.
At $R=8,H=30$, these average ratios are $6.2\%$ for SPlus,
$2.8\%$ for SOAP, $10\%$ for Muon, and $6.7\%$ for AdamW.

\subsection{Curvature along fitted high-noise directions}
\label{app:g-curvature-estimator}

At each method's own checkpoint, restore the saved parameters and optimizer
state before every one-step replay. Form 128 paired update differences
$\bm z_i=(\bm u_i-\bm u_i')/\sqrt2$, using fresh local batches of
$b=256/R$ sequences. Under the independent-replay model these differences
are centered and have the local-update covariance. For Table~\ref{tab:g-curvature}, use the first 64 of the 128 paired
samples to select
\[
 \widehat{\bm q}=\underset{\norm{\bm q}=1}{\arg\max}\;
 \frac1{64}\sum_{i=1}^{64}(\bm q^\top\bm z_i)^2.
\]
Hessian--vector products on 64 held-out validation sequences evaluate
curvature at the same checkpoint parameters. Draw 16 independent
standard Gaussian vectors in the six-matrix subspace and normalize
each to unit length, giving $\bm v_1,\ldots,\bm v_{16}$. Report
\[
 \kappa(\widehat{\bm q})=
 \frac{\widehat{\bm q}^{\top}\bm{\mathcal H}\widehat{\bm q}}
 {\frac1{16}\sum_{j=1}^{16}\bm v_j^{\top}\bm{\mathcal H}\bm v_j}.
\]
The numerator measures curvature along a direction selected for high
noise variance, not maximum curvature. The denominator estimates
$\tr(\bm{\mathcal H})/d$ in this subspace. Each method uses its own
trained checkpoint, so the Hessian and reference curvature are
method-specific. Ranges span the two estimates from replicas 0 and 1;
a single displayed value means both endpoints round to it. These are
one-step local-update measurements, not $H=30$ aggregate measurements.
The ratio removes the noise magnitude and does not estimate
$\tr(\bm{\mathcal H}\bm C)$.